\theoremstyle{plain}
\newtheorem{theorem}{Theorem}[section]
\newtheorem{proposition}[theorem]{Proposition}
\theoremstyle{definition}
\newtheorem{definition}[theorem]{Definition}
\theoremstyle{remark}
\newtheorem{remark}[theorem]{Remark}
\def\eqref#1{equation~\ref{#1}}
\def\1{\bm{1}}
\DeclareMathAlphabet{\mathsfit}{\encodingdefault}{\sfdefault}{m}{sl}
\SetMathAlphabet{\mathsfit}{bold}{\encodingdefault}{\sfdefault}{bx}{n}
\newcommand{\R}{\mathbb{R}}
\begin{document}
\author[1]{Rocio Diaz Martin*}
\author[2]{Ivan Medri*}
\author[2]{Yikun Bai*}
\author[2]{Xinran Liu}
\author[2]{\\Kangbai Yan}
\author[3]{Gustavo K. Rohde}
\author[2]{Soheil Kolouri}
\affil[1]{Department of Mathematics, Vanderbilt University}
\affil[2]{Department of Computer Science, Vanderbilt University}
\affil[3]{Department of Biomedical Engineering, Department of Electrical and Computer Engineering, Virginia University}
\affil[*]{Equal contribution}
\affil[1]{rocio.p.diaz.martin@vanderbilt.edu}
\affil[2]{ivan.v.medri@vanderbilt.edu}
\affil[2]{yikun.bai@vanderbilt.edu}
\affil[2]{xinran.liu@vanderbilt.edu}
\affil[2]{kangbai.yan@vanderbilt.edu}
\affil[3]{gustavo@virginia.edu}
\affil[2]{soheil.kolouri@vanderbilt.edu}
\date{}

\title{LCOT: Linear circular optimal transport}

\maketitle

\begin{abstract}
The optimal transport problem for measures supported on non-Euclidean spaces has recently gained ample interest in diverse applications involving representation learning. In this paper, we focus on circular probability measures, i.e., probability measures supported on the unit circle, and introduce a new computationally efficient metric for these measures, denoted as Linear Circular Optimal Transport (LCOT). The proposed metric comes with an explicit linear embedding that allows one to apply Machine Learning (ML) algorithms to the embedded measures and seamlessly modify the underlying metric for the ML algorithm to LCOT. We show that the proposed metric is rooted in the Circular Optimal Transport (COT) and can be considered the linearization of the COT metric with respect to a fixed reference measure. We provide a theoretical analysis of the proposed metric and derive the computational complexities for pairwise comparison of circular probability measures. Lastly, through a set of numerical experiments, we demonstrate the benefits of LCOT in learning representations of circular measures.
\end{abstract}

\section{Introduction}
Optimal transport (OT) \cite{villani2009optimal,peyre2019computational} is a mathematical framework that seeks the most efficient way of transforming one probability measure into another. The OT framework leads to a geometrically intuitive and robust metric on the set of probability measures, referred to as the Wasserstein distance. It has become an increasingly popular tool in machine learning, data analysis, and computer vision \cite{kolouri2017optimal,khamis2023earth}. OT's applications encompass generative modeling \cite{arjovsky2017wasserstein,tolstikhin2017wasserstein,kolouri2018sliced2}, domain adaptation \cite{courty2017joint,damodaran2018deepjdot}, transfer learning \cite{alvarez2020geometric,liu2022wasserstein}, supervised learning \cite{frogner2015learning}, clustering \cite{ho2017multilevel}, image and pointcloud registration \cite{haker2004optimal,bai2022sliced,le2023diffeomorphic}, and even inverse problems \cite{mukherjee2021end}, among others. Recently, there has been an increasing interest in OT for measures supported on manifolds \cite{bonet2022spherical,sarrazin2023linearized}. This surging interest is primarily due to: 1) real-world data is often supported on a low-dimensional manifold embedded in larger-dimensional Euclidean spaces, and 2) many applications inherently involve non-Euclidean geometry, e.g., geophysical data or cortical signals in the brain. 

In this paper, we are interested in efficiently comparing probability measures supported on the unit circle, aka circular probability measures, using the optimal transport framework. Such probability measures, with their densities often represented as circular/rose histograms, are prevalent in many applications, from computer vision and signal processing domains to geology and astronomy. For instance, in classic computer vision, the color content of an image can be accounted for by its hue in the HSV space, leading to one-dimensional circular histograms. Additionally, local image/shape descriptors are often represented via circular histograms, as evidenced in classic computer vision papers like SIFT \cite{lowe2004distinctive} and ShapeContext \cite{belongie2000shape}. In structural geology, the orientation of rock formations, such as bedding planes, fault lines, and joint sets, can be represented via circular histograms \cite{twiss1992structural}. In signal processing, circular histograms are commonly used to represent the phase distribution of periodic signals \cite{levine2002signal}. Additionally, a periodic signal can be normalized and represented as a circular probability density function (PDF).

Notably, a large body of literature exists on circular statistics \cite{jammalamadaka2001topics}. More specific to our work, however, are the seminal works of \cite{delon2010fast} and \cite{rabin2011}, which provide a thorough study of the OT problem and transportation distances on the circle (see also \cite{cabrelli1998linear}). OT on circles has also been recently revisited in various papers \cite{hundrieser2022statistics,bonet2022spherical}, further highlighting the topic's timeliness. Unlike OT on the real line, generally, the OT problem between probability measures defined on the circle does not have a closed-form solution. This stems from the intrinsic metric on the circle and the fact that there are two paths between any pair of points on a circle (i.e., clockwise and counter-clockwise). Interestingly, however, when one of the probability measures is the Lebesgue measure, i.e., the uniform distribution, the 2-Wasserstein distance on the circle has a closed-form solution, which we will discuss in the Background section.


We present the Linear Circular OT (LCOT), a new transport-based distance for circular probability measures. By leveraging the closed-form solution of the circular 2-Wasserstein distance between each distribution and the uniform distribution on the circle, our method sidesteps the need for optimization. Concisely, we determine the Monge maps that push the uniform distribution to each input measure using the closed-form solution, then set the distance between the input measures based on the disparities between their respective Monge maps.  Our approach draws parallels with the Linear Optimal Transport (LOT) framework proposed by \cite{wang2013linear} and can be seen as an extension of the cumulative distribution transform (CDT) presented by \cite{park2018cumulative} 
to circular probability measures (see also, \cite{aldroubi2021signed,aldroubi2021partitioning}). 
The idea of linearized (unbalanced) optimal transport was also studied recently in various works \cite{cai2022linearized,moosmuller2023linear,sarrazin2023linearized,cloninger2023linearized}. From a geometric perspective, we provide explicit logarithmic and exponential maps between the space of probability measures on the unit circle and the tangent space at a reference measure (e.g., the Lebesgue measure) \cite{wang2013linear,cai2022linearized,sarrazin2023linearized}. Then, we define our distance in this tangent space, giving rise to the terminology `Linear' Circular OT. The logarithmic map provides a linear embedding for the LCOT distance, while the exponential map inverts this embedding. We provide a theoretical analysis of the proposed metric, LCOT, and demonstrate its utility in various problems. 
 
\noindent{\textbf{Contributions.}} Our specific contributions in this paper include 1) proposing a computationally efficient metric for circular probability measures, 2) providing a theoretical analysis of the proposed metric, including its computational complexity for pairwise comparison of a set of circular measures, and 3) demonstrating the robustness of the proposed metric in manifold learning, measure interpolation, and clustering/classification of probability measures.

\section{Background}
\label{sec: background}

\subsection{Circle Space}
The unit circle $\mathbb{S}^1$ can be defined as the quotient space
$$\mathbb{S}^1={\mathbb{R}}/{\mathbb{Z}}=\left\{\{x+n: \, n\in \mathbb{Z}\}: \,  x\in[0,1)\right\}.$$
The above definition is equivalent to $[0,1]$ under the identification $0=1$. For the sake of simplicity in this article, we treat them as indistinguishable. Furthermore, we adopt a parametrization of the circle as $[0,1)$, designating the North Pole as $0$ and adopting a clockwise orientation. This will serve as our \textit{canonical} parametrization.


Let $|\cdot|$ denote the absolute value on $\mathbb{R}$. With the aim of avoiding any confusion, when necessary, we will denote it by  $|\cdot|_{\mathbb{R}}$.
Then, a metric on $\mathbb{S}^1$ can be defined as 
 $$|x-y|_{\mathbb{S}^1}:=\min\{|x-y|_{\mathbb{R}},1-|x-y|_{\mathbb{R}}\}, \qquad x,y\in[0,1)$$
or, equivalently,  as
 $$|x-y|_{\mathbb{S}^1}:=\min_{k\in \mathbb{Z}} |x-y+k|_{\mathbb{R}}, $$
where for the second formula $x,y\in\mathbb{R}$ are understood as representatives of two classes of equivalence in  $\mathbb{R}/\mathbb{Z}$, but these two representatives $x,y$ do not need to belong to $[0,1)$. It turns out that such a metric defines a geodesic distance:  It is the smaller of the two arc lengths between the points $x$, $y$ along the circumference (cf. \cite{jammalamadaka2001topics}, where here we parametrize angles between $0$ and $1$ instead of between $0$ and $2\pi$. 
Besides, the circle $\mathbb{S}^1$ can be endowed with a group structure. Indeed, as the quotient space $\mathbb{R}/\mathbb{Z}$ it inherits the addition from $\mathbb{R}$ modulo $\mathbb{Z}$. Equivalently, for any $x,y\in [0,1)$, we can define the operations $+,-$ as  
    \begin{equation}
   (x,y)\mapsto \begin{cases}
        x\pm y, &\text{ if }x\pm y\in[0,1) \\ 
        x\pm y\mp1, & \text{ otherwise} . 
    \end{cases}     
    \end{equation}

\subsection{Distributions on the circle} 

Regarded as a set, $\mathbb{S}^1$ can be identified with $[0,1)$. Thus, signals over $\mathbb{S}^1$ can be interpreted as 1-periodic functions on $\R$. More generally, every measure $\mu\in\mathcal{P}(\mathbb{S}^1)$ can be regarded as a measure on $\mathbb{R}$ by
\begin{equation}\label{eq: extended measure on R}
  {\mu}(A+n):=\mu(A), \qquad \text{ for every } A\subseteq[0,1) \text{ Borel subset, and } \, n\in\mathbb{Z}.  
\end{equation}
Then, its \textit{cumulative distribution function}, denoted by  $F_\mu$, is defined as 
\begin{equation}\label{eq: CDF on [0,1)}
  F_{\mu}(y):=\mu([0,y))=\int_0^y d\mu, \qquad \forall y\in[0,1)  
\end{equation}
and can be extended to a function on $\mathbb{R}$ by 
\begin{equation}\label{eq: extended CDF}
  F_{\mu}(y+n):=F_{\mu}(y)+n,\qquad \forall y\in[0,1), \, n\in\mathbb{Z}. 
\end{equation}
Figure \ref{fig: CDF} shows the concept of $F_\mu$ and its extension to $\mathbb{R}$.

In the rest of this article, we do not distinguish between the definition of measures on $\mathbb{S}^1$ or their periodic extensions into $\R$, as well as between their CDFs or their extended CDFs into $\R$. 

\begin{figure}[ht]
   \begin{center}
       \includegraphics[width=0.5\linewidth]{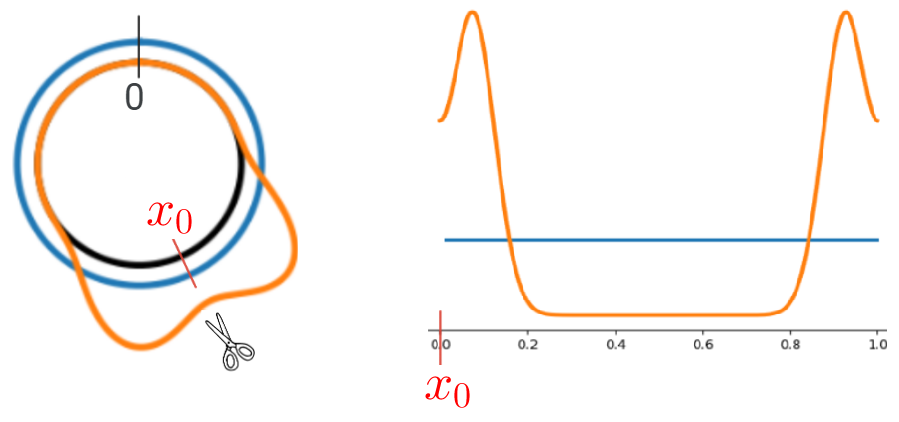}
    \end{center}
    \vspace{-.2in}
    \caption{Visualization of densities (blue and orange) on $\mathbb{S}^1$ and after unrolling them to $[0,1)$ by considering  a cutting point $x_{0}$. The blue density is the uniform distribution on $\mathbb{S}^1$, represented as having height $1$ over the unit circle in black.}
    \label{fig: cut_pdf}
\end{figure}

\begin{definition}[Cumulative distribution function with respect to a reference point] \label{def: CDF cut}
Let $\mu\in\mathcal{P}(\mathbb{S}^1)$, and consider a reference point $x_0\in\mathbb{S}^{1}$. Assume that $\mathbb{S}^1$ is identified as $[0,1)$ according to our canonical parametrization. By abuse of notation, also denote by $x_0$ the point in $[0,1)$ that corresponds to the given reference point when considering the canonical parametrization. We define
\begin{equation*}
    F_{\mu, x_0}(y):=F_\mu(x_0+y)-F_{\mu}(x_0).
\end{equation*}
\end{definition}

The reference point $x_0$ can be considered as the ``origin'' for parametrizing the circle as $[0,1)$ starting from $x_0$. That is, $x_0$ will correspond to $0$, and from there, we move according to the clockwise orientation.
Thus, we can think of $x_0$ in the above definition as a ``cutting point'': A point where we cut $\mathbb{S}^1$ into a line by $x_0$ and so we can unroll PDFs and CDFs over the circle into $\mathbb{R}$. See Figures \ref{fig: cut_pdf} and \ref{fig: CDF}.

Besides, note that $F_{\mu,x_0}(0)=0$ and $F_{\mu,x_0}(1)=1$ by the 1-periodicity of $\mu$. This is to emphasize that in the new system of coordinates, or in the new parametrization of $\mathbb{S}^1$ as $[0,1)$ starting from $x_0$, the new origin $x_0$ plays the role of $0$. 
Finally, notice that if $x_0$ is the North Pole, which corresponds to $0$ in the canonical parametrization of the circle, then $F_{\mu,x_0}=F_\mu$.

\begin{figure}[ht]
    \centering
    \includegraphics[width=1\linewidth]{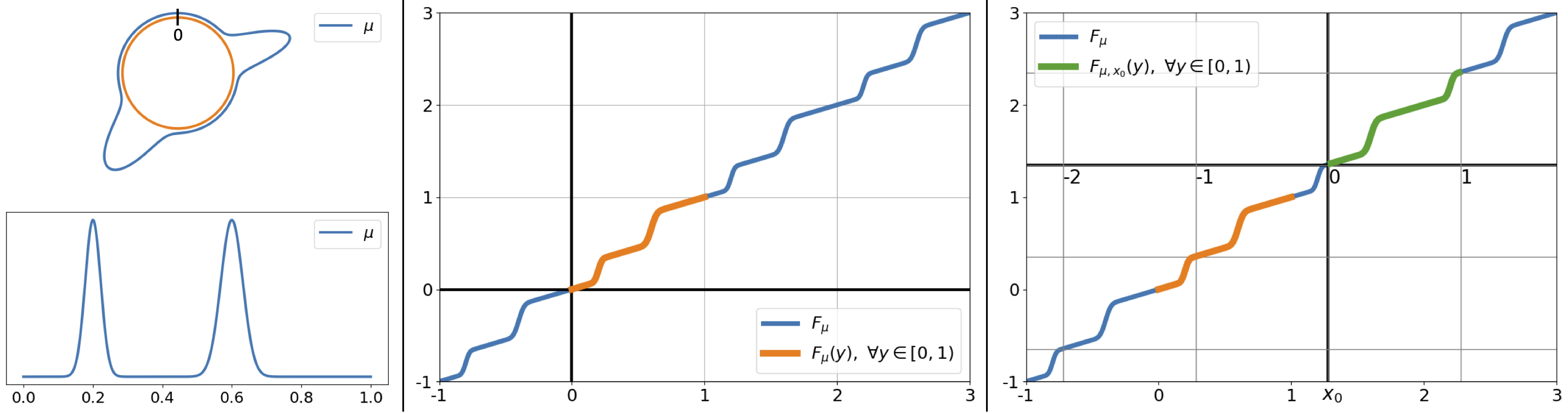}
    \caption{Left: The density of a probability measure, $\mu$. Middle: visualization of the periodic extension to $\mathbb{R}$ of a CDF, $F_\mu$, of measure $\mu$ on   $[0,1)\sim\mathbb{S}^1$.  Right: Visualization of $F_{\mu,x_0}$ given in Definition \ref{def: CDF cut}, where the parameterization of the circle is changed; now, the origin $0$ is the point $x_0$.}
    \label{fig: CDF} 
\end{figure}

\begin{definition}\label{def: quantile}
    The quantile function $F^{-1}_{\mu,x_0}:[0,1]\to [0,1]$ is defined as 
\begin{align}
F^{-1}_{\mu,x_0}(y):=\inf\{x: \, F_{\mu,x_0}(x)>y \}.\nonumber 
\end{align}
\end{definition}

\subsection{Optimal transport on the circle}

\subsubsection{Problem setup}\label{sec: problem set up}
Given $\mu,\nu \in \mathcal{P}(\mathbb{S}^1)$, let $c(x,y):=h(|x-y|_{\mathbb{S}^1})$ be the cost of transporting a unit mass from $x$ to $y$ on the circle,  where $h:\mathbb{R}\to \mathbb{R}_+$ is a convex increasing function. The Circular Optimal Transport cost between $\mu$ and $\nu$ is defined as 
\begin{equation}\label{eq: COT def}
    COT_{h}(\mu,\nu):=\inf_{\gamma\in\Gamma(\mu,\nu)}\int_{\mathbb{S}^1\times\mathbb{S}^1} c(x,y) \, d\gamma(x,y),
\end{equation}       
where $\Gamma(\mu,\nu)$ is the set of all transport plans from $\mu$ to $\nu$, that is, $\gamma\in\Gamma(\mu,\nu)$ is such that $\gamma \in \mathcal{P}(\mathbb{S}^1\times\mathbb{S}^1)$ having first and second marginals $\mu$ and $\nu$, respectively. There always exists a minimizer $\gamma^*$ of \ref{eq: COT def}, and it is called a Kantorovich optimal plan (see, for example, \cite[Th. 1.4]{sant2015}). 

When  $h(x)=|x|^p$, for $1\leq p<\infty$, we denote $COT_{h}(\cdot,\cdot)=COT_p(\cdot,\cdot)$, and $COT_p(\cdot,\cdot)^{1/p}$ defines a distance on $\mathcal{P}(\mathbb{S}^1)$. In general,
\begin{equation}\label{eq: monge formulation}
  COT_{h}(\mu,\nu)\leq \inf_{M:\, M_{\#}\mu=\nu}\int_{\mathbb{S}^1}h(|M(x)-x|_{\mathbb{S}^1}) \, d\mu(x),  
\end{equation}
and a minimizer $M^*:\mathbb{S}^1\to \mathbb{S}^1$ of the right-hand side of \ref{eq: monge formulation}, among all maps $M$ that pushforward $\mu$ to $\nu$ \footnote{The pushforward $M_\#\mu$ is defined by the change of variables $\int\varphi(y) \, d M_\#\mu(y):=\int\varphi(M(x)) \, d\mu(x)$, for every continuous function $\varphi:\mathbb{S}^1\to \mathbb{C}$.}, might not exist. In this work, we will consider the cases where a minimizer $M^*$ does exist, for example, when the reference measure $\mu$ is absolutely continuous with respect to the Lebesgue measure on $\mathbb{S}^1$ (see \cite{mccann2001polar, sant2015}). In these scenarios, such map $M^*$ is called an optimal transportation map or a Monge map. Moreover, as $\mu,\nu\in\mathcal{P}(\mathbb{S}^1)$ can be regarded as measures on $\mathbb{R}$ according to \eqref{eq: extended measure on R}, we can work with transportation maps $M:\mathbb{R}\to \mathbb{R}$ that are $1$-periodic functions satisfying $M_\#\mu=\nu$.

\begin{proposition}\label{prop: equivalent COT}
Two equivalent  formulations of $COT_{h}$ are the following: 
\begin{align}
COT_{h}(\mu,\nu)&=\inf_{x_0\in[0,1)}\int_{0}^1h(|F_{\mu,x_0}^{-1}(x)-F_{\nu,x_0}^{-1}(x)|_{\mathbb{R}})  \, dx \label{eq: cot solution x0}\\ 
&=\inf_{\alpha\in\mathbb{R}}\int_{0}^1h(|F_{\mu}^{-1}(x)-F_{\nu}^{-1}(x-\alpha)|_{\mathbb{R}}) \, dx \label{eq: cot alpha}.
\end{align}    

When there exist minimizers $x_{cut}$ and $\alpha_{\mu,\nu}$ of \eqref{eq: cot solution x0} and \eqref{eq: cot alpha}, respectively, the relation between them is given by
\begin{equation}\label{eq: equation for alpha}
    \alpha_{\mu,\nu} = F_\mu({x_{cut}})- F_\nu(x_{cut}). 
\end{equation}
Moreover, if
$\mu= Unif(\mathbb{S}^1)$ and $h(x)=|x|^2$, 
it can be verified that $\alpha_{\mu,\nu}$ is the antipodal of $\mathbb{E}(\nu)$, i.e.,
\begin{equation}\label{eq: alpha mean}
\alpha_{\mu,\nu}= x_{cut} - F_\nu(x_{cut}) = \mathbb{E}(\nu) -1/2 .
\end{equation} 
\end{proposition}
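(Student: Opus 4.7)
My plan is to bridge the original Kantorovich formulation \eqref{eq: COT def} with the 1D expressions via a ``cut-point'' reduction, then connect the two 1D expressions by a change of variables, and finally specialize. The key and most substantive step is the identity $COT_h(\mu,\nu) = \inf_{x_0\in[0,1)} \int_0^1 h(|F_{\mu,x_0}^{-1}(x) - F_{\nu,x_0}^{-1}(x)|_{\mathbb{R}}) dx$. The bound $\leq$ is easy: fixing $x_0$ unrolls the circle to an interval, and the comonotone coupling of the unrolled measures lifts to a valid circular transport plan whose cost is the inner integral. The reverse bound $\geq$ requires the classical structural result (Delon--Salomon--Sobolevski, Rabin--Delon--Gousseau): any COT-optimal plan admits a cut point across which no mass is exchanged, so after cutting there the problem reduces to line-OT. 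This is where the main conceptual difficulty lies, and I would simply cite it.

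Next I would pass from \eqref{eq: cot solution x0} to \eqref{eq: cot alpha} by change of variables. The defining relation $F_{\mu,x_0}(y) = F_\mu(x_0+y) - F_\mu(x_0)$ inverts to $F_{\mu,x_0}^{-1}(u) = F_\mu^{-1}(u + F_\mu(x_0)) - x_0$ and likewise for $\nu$, so the $x_0$-shift cancels in the difference:
\[
F_{\mu,x_0}^{-1}(u) - F_{\nu,x_0}^{-1}(u) = F_\mu^{-1}(u+F_\mu(x_0)) - F_\nu^{-1}(u+F_\nu(x_0)).
\]
Setting $v = u + F_\mu(x_0)$ and $\alpha := F_\mu(x_0) - F_\nu(x_0)$, the quasi-periodicity $F_\nu^{-1}(t+1) = F_\nu^{-1}(t) + 1$ absorbs the shifted integration bounds and yields $\int_0^1 h(|F_\mu^{-1}(v) - F_\nu^{-1}(v-\alpha)|_{\mathbb{R}}) dv$. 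Minimizing in $x_0$ corresponds (bijectively, modulo integer shifts) to minimizing in $\alpha$, which proves \eqref{eq: cot alpha} and simultaneously gives the relation $\alpha_{\mu,\nu} = F_\mu(x_{cut}) - F_\nu(x_{cut})$.

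For the last claim, take $\mu = \text{Unif}(\mathbb{S}^1)$ and $h(x)=x^2$. Then $F_\mu(x)=x$ and $F_\mu^{-1}(x)=x$, so the general relation collapses to $\alpha_{\mu,\nu} = x_{cut} - F_\nu(x_{cut})$. To obtain the closed form $\mathbb{E}(\nu) - 1/2$ independently, I would differentiate \eqref{eq: cot alpha} in $\alpha$. Using $z = F_\nu^{-1}(x-\alpha)$ (so $dx = f_\nu(z)\,dz$) together with the 1-periodicity of the integrand $(F_\nu(z)+\alpha-z)^2 f_\nu(z)$ (which follows from $F_\nu(z+1)=F_\nu(z)+1$ and $f_\nu(z+1)=f_\nu(z)$) rewrites the cost as
\[
C(\alpha) = \int_0^1 (F_\nu(z)+\alpha-z)^2 f_\nu(z)\, dz.
\]
The first-order condition $C'(\alpha)=0$ then gives $\alpha = \int_0^1 z\, dF_\nu(z) - \int_0^1 F_\nu(z)\, dF_\nu(z) = \mathbb{E}(\nu) - \tfrac{1}{2}$, since the second integral evaluates to $\int_0^1 u\, du = 1/2$ under the substitution $u=F_\nu(z)$. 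Apart from the cut-point identity flagged as the main obstacle, every remaining step is essentially bookkeeping with the periodic extensions of CDFs and quantiles.
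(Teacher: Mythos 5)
Your skeleton matches the paper's: the paper likewise cites \cite{delon2010fast,rabin2011} for the hard equivalence with the Kantorovich problem, proves the relation \eqref{eq: equation for alpha} by exactly your change of variables (inverting $F_{\mu,x_0}$, cancelling the $x_0$ shift, and absorbing the shifted bounds by quasi-periodicity of the quantiles), and obtains \eqref{eq: alpha mean} by a direct computation in the uniform case. The genuine gap is in your passage from \eqref{eq: cot solution x0} to \eqref{eq: cot alpha}. Writing $\mathrm{Cost}_1(x_0)$ and $\mathrm{Cost}_2(\alpha)$ for the two integrals, your change of variables proves the pointwise identity $\mathrm{Cost}_1(x_0)=\mathrm{Cost}_2(\alpha(x_0))$ with $\alpha(x_0)=F_\mu(x_0)-F_\nu(x_0)$, and hence only the inequality $\inf_{x_0}\mathrm{Cost}_1\ge\inf_{\alpha\in\mathbb{R}}\mathrm{Cost}_2$. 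The claimed ``bijective (modulo integer shifts) correspondence'' between the two minimizations is false: the map $x_0\mapsto F_\mu(x_0)-F_\nu(x_0)$ is in general neither injective nor surjective --- for $\mu=\nu$ it is identically zero, while $\alpha$ ranges over all of $\mathbb{R}$ --- so the infimum over $\alpha$ is a priori over a strictly larger set than the range of $\alpha(\cdot)$, and nothing you wrote excludes that some $\alpha$ outside this range does strictly better. The fix is cheap and already in your toolkit: for \emph{every} $\alpha\in\mathbb{R}$, the shifted comonotone coupling $x\mapsto\left(F_\mu^{-1}(x),F_\nu^{-1}(x-\alpha)\right)$ pushes Lebesgue measure on $[0,1]$ to a valid plan in $\Gamma(\mu,\nu)$ after projecting to $\mathbb{S}^1\times\mathbb{S}^1$, and since $|\cdot|_{\mathbb{S}^1}\le|\cdot|_{\mathbb{R}}$ and $h$ is increasing, its circular cost is at most $\mathrm{Cost}_2(\alpha)$; hence $\mathrm{Cost}_2(\alpha)\ge COT_h(\mu,\nu)$ for all $\alpha$, which combined with the cited identity $COT_h(\mu,\nu)=\inf_{x_0}\mathrm{Cost}_1(x_0)$ closes the loop. (The paper sidesteps this by citing \cite{delon2010fast} for \eqref{eq: cot alpha} as well.) Only once the two infima are known to coincide does your pointwise identity yield \eqref{eq: equation for alpha}: then $\alpha(x_{cut})$ attains the common infimum and is therefore a minimizer of $\mathrm{Cost}_2$.

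A secondary issue: your derivation of \eqref{eq: alpha mean} assumes $\nu$ has a density ($dx=f_\nu(z)\,dz$; moreover $\int_0^1 F_\nu(z)f_\nu(z)\,dz=1/2$ requires $F_\nu$ to push $\nu$ to the uniform law, i.e., $\nu$ atomless). The paper's computation needs no such hypothesis: with $F_\mu^{-1}(x)=x$ it expands $\int_0^1|x+\alpha-F_\nu^{-1}(x)|^2\,dx$ as a quadratic in $\alpha$ and uses $\int_0^1 F_\nu^{-1}(x)\,dx=\mathbb{E}(\nu)$, valid for arbitrary $\nu\in\mathcal{P}(\mathbb{S}^1)$. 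Your first-order-condition idea works equally well if you skip the substitution $z=F_\nu^{-1}(x-\alpha)$ and differentiate that quadratic in $\alpha$ directly.
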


The proof of \eqref{eq: cot alpha} in Proposition \ref{prop: equivalent COT} is provided in \cite{delon2010fast} for the optimal coupling for any pair of probability measures on $\mathbb{S}^1$. For the particular and enlightening case of discrete probability measures on $\mathbb{S}^1$, we refer the reader to \cite{rabin2011}. In that article, \eqref{eq: cot solution x0} is introduced. Finally, \eqref{eq: alpha mean} is given for example in \cite[Proposition 1]{bonet2022spherical}. 

Proposition \ref{prop: equivalent COT} allows us to see the optimization problem of transporting measures supported on the circle as an optimization problem on the real line by looking for the best “cutting point” so that the circle can be unrolled into the real line by 1-periodicity.

\begin{remark}\label{remark: not unique x_cut but unique alpha}
The minimizer of \eqref{eq: cot alpha} is unique (see \eqref{eq: equation for alpha}), but there can be multiple minimizers for \eqref{eq: cot solution x0} (see Figure \ref{fig: alpha vs x_cut} in Appendix \ref{sec: appendix alpha vs x_cut}).
However, when a minimizer $x_{cut}$ of \eqref{eq: cot solution x0} exists, it will lead to the optimal transportation displacement on a circular domain (see Section \ref{sec: closed formula for COT} below). 
\end{remark}

\subsubsection{A closed-form formula for the optimal circular displacement}\label{sec: closed formula for COT}

Let $x_{cut}$ be a minimizer of \eqref{eq: cot solution x0}, that is, 
\begin{gather}\label{eq: cot as classic ot}
  COT_{h}(\mu,\nu) 
  =\int_0^1 h(|F_{\mu, x_{cut}}^{-1}(x)-F_{\nu,x_{cut}}^{-1}(x)|_{\mathbb{R}}) \, dx.
\end{gather}
From \eqref{eq: cot as classic ot}, one can now emulate the Optimal Transport Theory on the real line (see, for e.g., \cite{sant2015}): The point $x_{cut}$ provides a reference where one can “cut” the circle. Subsequently, computing the optimal transport between $\mu$ and $\nu$ boils down to solving an optimal transport problem between two distributions on the real line. 

We consider the parametrization of $\mathbb{S}^1$ as $[0,1)$ by setting $x_{cut}$ as the origin and moving along the clockwise orientation. Let us use the notation $\widetilde{x}\in [0,1)$ for the points given by such parametrization, and the notation $x\in [0,1)$ for the canonical parametrization. That is, the change of coordinates from the two parametrizations is given by $x = \widetilde{x} + x_{cut}$. Then, if $\mu$ does not give mass to atoms, by \eqref{eq: cot as classic ot} and the classical theory of Optimal Transport on the real line, the optimal transport map (Monge map) that takes a point $\widetilde{x}$ to a point $\widetilde{y}$ is given by
\begin{equation}\label{eq: circ monge map from x_0}
    F_{\nu,x_{cut}}^{-1}\circ F_{\mu,x_{cut}}( \widetilde{x})=\widetilde{y}
\end{equation}
That is, \ref{eq: circ monge map from x_0} defines a circular optimal transportation map from $\mu$ to $\nu$ written in the parametrization that establishes $x_{cut}$ as the ``origin.'' 
If we want to refer everything to the original labeling of the circle, that is, if we want to write \eqref{eq: circ monge map from x_0} with respect to the canonical parametrization, we need to change coordinates
\begin{equation}\label{eq: change or coord}    
\begin{cases}
    \widetilde{x} =& x - x_{cut}  \\
    \widetilde{y} =& y - x_{cut}
\end{cases}.
\end{equation}
Therefore, a closed-form formula for an optimal circular transportation map in the canonical coordinates is given by
\begin{equation}\label{eq: circ monge map}
    M_{\mu}^{\nu}(x):=F_{\nu,x_{cut}}^{-1}\circ F_{\mu,x_{cut}}(x-x_{cut})+x_{cut}=y, \qquad x\in [0,1),
\end{equation}
and the corresponding optimal circular transport displacement that takes $x$ to $y$ is
\begin{equation}\label{eq: circ opt disp}
     M_{\mu}^{\nu}(x)-x=F_{\nu,x_{cut}}^{-1}\circ F_{\mu,x_{cut}}(x-x_{cut})-(x-x_{cut}), \qquad x\in [0,1).
\end{equation}
In summary, we condense the preceding discussion in the following result. The proof is provided in Appendix \ref{sec: Appendix proofs}. While the result builds upon prior work, drawing from \cite{bonet2022spherical,rabin2011,sant2015}, it offers an explicit formula for the optimal Monge map, a detail previously lacking in the literature. 

\begin{theorem}\label{thm: cot embedding} Let $\mu,\nu\in\mathcal{P}(\mathbb{S}^1)$. Assume that $\mu$ is absolutely continuous with respect to the Lebesgue measure on $\mathbb{S}^1$ (that is, it does not give mass to atoms).
\begin{enumerate}
    \item  If  $x_{cut}$ is a minimizer of \eqref{eq: cot solution x0}, then \eqref{eq: circ monge map} defines an optimal circular transportation map. (We will use the notation $M_\mu^\nu$ for the Monge map from $\mu$ to $\nu$.)
    \item If $\alpha_{\mu,\nu}$ minimizes \eqref{eq: cot alpha}, then
    \begin{equation}\label{eq: mong with alpha}
    M_{\mu}^{\nu}(x)=F_\nu^{-1}(F_\mu(x)-\alpha_{\mu,\nu})
\end{equation}
    \item    If $x_0,x_1$ are two minimizers of \eqref{eq: cot solution x0}, then
    \begin{equation*}
    F_{\nu,x_0}^{-1}\circ F_{\mu,x_0}(x-x_0)+x_0=    F_{\nu,x_1}^{-1}\circ F_{\mu,x_1}(x-x_1)+x_1 \qquad \forall \, x\in [0,1).
    \end{equation*} 
    \item 
    The optimal map defined by the formula \eqref{eq: circ monge map} is unique. (The uniqueness is as functions on $\mathbb{S}^1$, or as functions on $\mathbb{R}$ up to modulo $\mathbb{Z}$).
    \item If also $\nu$ does not give mass to atoms, then $(M_\mu^\nu)^{-1}=M_\nu^\mu$.  
\end{enumerate}    
\end{theorem}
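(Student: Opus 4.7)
My plan is to establish the five claims in sequence, all of them leveraging the reduction of circular OT to one-dimensional OT on the real line via a fixed cutting point, together with the shift identity relating $F_{\mu,x_0}$ to $F_\mu$.

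For Part 1, I would start from \eqref{eq: cot as classic ot}, which follows by plugging the minimizer $x_{cut}$ into the identity \eqref{eq: cot solution x0} of Proposition \ref{prop: equivalent COT}. Since $\mu$ is atomless, the measure with CDF $F_{\mu,x_{cut}}$ on $[0,1)$ is also atomless, so the classical theory of one-dimensional optimal transport with convex cost $h$ (cf. \cite{sant2015}) applies and yields $F_{\nu,x_{cut}}^{-1}\circ F_{\mu,x_{cut}}$ as the unique optimal Monge map between the shifted distributions in the coordinate $\widetilde x$. Undoing the rotation via \eqref{eq: change or coord} recovers the formula \eqref{eq: circ monge map}; a brief check that the resulting map is $1$-periodic and pushes $\mu$ forward to $\nu$ completes this step.

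For Part 2, I would compute $F_{\mu,x_{cut}}(x-x_{cut}) = F_\mu(x)-F_\mu(x_{cut})$ directly from Definition \ref{def: CDF cut}, and similarly derive $F_{\nu,x_{cut}}^{-1}(u)=F_\nu^{-1}(u+F_\nu(x_{cut}))-x_{cut}$ from the definition of the quantile and the shift. Substituting into \eqref{eq: circ monge map}, the $x_{cut}$ terms cancel, and the relation $\alpha_{\mu,\nu}=F_\mu(x_{cut})-F_\nu(x_{cut})$ from \eqref{eq: equation for alpha} yields \eqref{eq: mong with alpha}. Parts 3 and 4 are then immediate consequences: any two minimizers $x_0,x_1$ of \eqref{eq: cot solution x0} produce the same $\alpha_{\mu,\nu}$ via \eqref{eq: equation for alpha} (uniqueness of the minimizer of \eqref{eq: cot alpha}, per Remark \ref{remark: not unique x_cut but unique alpha}), and the right-hand side of \eqref{eq: mong with alpha} depends only on that $\alpha_{\mu,\nu}$.

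For Part 5, I would first show $\alpha_{\nu,\mu}=-\alpha_{\mu,\nu}$: swapping $\mu,\nu$ in \eqref{eq: cot alpha} and changing variables $x\mapsto x+\alpha$ mod $\mathbb{Z}$, the $1$-periodicity of the extended quantile functions turns the swapped functional into the original one with $\alpha\mapsto-\alpha$, so unique minimizers are negatives of one another. Then if $y=M_\mu^\nu(x)=F_\nu^{-1}(F_\mu(x)-\alpha_{\mu,\nu})$, applying $F_\nu$ gives $F_\nu(y)=F_\mu(x)-\alpha_{\mu,\nu}$, hence $x=F_\mu^{-1}(F_\nu(y)-\alpha_{\nu,\mu})=M_\nu^\mu(y)$, where I use that both $\mu$ and $\nu$ are atomless to invert the respective CDFs $\mu$- and $\nu$-almost everywhere.

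The main obstacle I anticipate is the bookkeeping involving the periodic extensions in \eqref{eq: extended measure on R} and \eqref{eq: extended CDF}, especially in Part 5, where the substitution $x\mapsto x+\alpha$ must be handled mod $\mathbb{Z}$, and in Part 1, where one must verify that the Monge map obtained on the real line, once rolled back via $x_{cut}$, descends to a well-defined transport map on $\mathbb{S}^1$. The rotational invariance of the circle's metric is what ultimately makes the cutting-point reduction legitimate, and keeping track of this invariance carefully is the technical heart of the argument.
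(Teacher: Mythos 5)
Your Parts 1, 2, 3, and 5 follow essentially the paper's own route (cut-point reduction to the line, quantile algebra with \eqref{eq: equation for alpha}, and the composition argument using $\alpha_{\nu,\mu}=-\alpha_{\mu,\nu}$), but there is a genuine gap at Part 4. You read Part 4 as the statement that the formula \eqref{eq: circ monge map} is well defined independently of which minimizer $x_{cut}$ of \eqref{eq: cot solution x0} is chosen --- but that is exactly Part 3, which you have already proved. Part 4 asserts something strictly stronger: that $M_\mu^\nu$ is the \emph{unique} optimal transport map from $\mu$ to $\nu$ on the circle, i.e., any optimal Monge map must coincide with it (as a map on $\mathbb{S}^1$, or modulo $\mathbb{Z}$ on $\mathbb{R}$). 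The paper proves this by invoking McCann's theorem on uniqueness of optimal maps on Riemannian manifolds \cite[Theorem 13]{mccann2001polar} (with \cite{sant2015} cited for the quadratic case). Your observation that the right-hand side of \eqref{eq: mong with alpha} depends only on $\alpha_{\mu,\nu}$ says nothing about a hypothetical optimal map \emph{not} arising from formula \eqref{eq: circ monge map}; and uniqueness of the 1D map after cutting does not transfer automatically, because an arbitrary optimal map or plan on the circle is not a priori induced by unrolling at your particular $x_{cut}$ --- ruling that out requires either McCann's result or the characterization of all optimal circular plans in \cite{delon2010fast}.

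A smaller but real issue sits in Part 1: optimality of the rolled-back map does not follow from the classical 1D theory plus ``undoing the rotation'' alone, because the circular cost $h(|\cdot|_{\mathbb{S}^1})$ and the line cost $h(|\cdot|_{\mathbb{R}})$ are different cost functions. The needed argument is a sandwich: since $|\cdot|_{\mathbb{S}^1}\le|\cdot|_{\mathbb{R}}$ and $h$ is increasing, the circular Monge cost of $M_\mu^\nu$ is at most its cost computed with the line distance in the cut coordinates, which equals $COT_h(\mu,\nu)$ by \eqref{eq: cot as classic ot}; on the other hand, \eqref{eq: monge formulation} gives $COT_h(\mu,\nu)$ as a lower bound for the circular Monge cost of any transport map, so equality holds throughout and $M_\mu^\nu$ is optimal. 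This is precisely the change-of-variables computation the paper carries out explicitly (together with the pushforward verification $(M_\mu^\nu)_\#\mu=\nu$, which you correctly flag but defer). The sandwich step is fixable within your outline, but it is a necessary piece of the argument, not mere bookkeeping of the periodic extensions.
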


Having established the necessary background, we are now poised to introduce our proposed metric. In the subsequent section, we present the Linear Circular Optimal Transport (LCOT) metric.

\section{Method}
\label{sec: method}

\subsection{Linear Circular Optimal Transport Embedding (LCOT)}

By following the footsteps of\cite{wang2013linear}, starting from the COT framework, we will define an embedding for circular measures by computing the optimal displacement from a  fixed reference measure. Then, the $L^p$-distance on the embedding space defines a new distance between circular measures (Theorem \ref{thm: distance} below).

\begin{definition}[LCOT Embedding] \label{def: embedding} 

For a fixed reference measure $\mu\in\mathcal{P}(\mathbb{S}^1)$ that is absolutely continuous with respect to the Lebesgue measure on $\mathbb{S}^1$, we define the \textit{Linear Circular Optimal Transport (LCOT) Embedding} of a target measure $\nu\in\mathcal{P}(\mathbb{S}^1)$ with respect to the cost $COT_{h}(\cdot,\cdot)$, for a convex increasing function $h:\mathbb{R}\to \mathbb{R}_+$, 
by
    \begin{equation}\label{eq: embedding general}
        \widehat{\nu}^{\mu,h}(x):= F_{\nu,x_{cut}}^{-1}(F_\mu(x-x_{cut}))-(x-x_{cut})=F_\nu^{-1}(F_\mu(x)-\alpha_{\mu,\nu})-x, \quad x\in[0,1), 
    \end{equation}
    where $x_{cut}$ is any minimizer of \eqref{eq: cot solution x0} and $\alpha_{\mu,\nu}$ is the minimizer of \eqref{eq: cot alpha}.
\end{definition}

The LCOT-Embedding corresponds to the optimal (circular) displacement that comes from the problem of transporting the reference measure $\mu$ to the given target measure $\nu$ with respect to a general cost $COT_{h}(\cdot,\cdot)$ (see \eqref{eq: mong with alpha} from Theorem \ref{thm: cot embedding} and \eqref{eq: circ opt disp}).

\begin{definition}[LCOT distance]
Under the settings of Definition \ref{def: embedding}, we define the LCOT-discrepancy  by
    \begin{align*}
        LCOT_{\mu,h}({\nu_1},{\nu_2}) &:=
\int_{0}^1h\left(\min_{k\in\mathbb{Z}}\{|\widehat{\nu_1}^{\mu,h}(t)-\widehat{\nu_2}^{\mu,h}(t)+k|_\mathbb{R}\}\right) \, d\mu(t), \nonumber \quad \forall \nu_1,\nu_2\in \mathcal{P}(\mathbb{S}^1).
    \end{align*}

In particular, when $h(\cdot)=|\cdot|^p$, for $1\leq p<\infty$, we define
\begin{align*}
        LCOT_{\mu,p}({\nu_1},{\nu_2}) &:=\|\widehat{\nu_1}^{\mu,h}-\widehat{\nu_2}^{\mu,h}\|^p_{L^p(\mathbb{S}^1,d\mu)} \\
&=
\int_{0}^1\left(\min_{ k\in\mathbb{Z}}\{|\widehat{\nu_1}^{\mu,h}(t)-\widehat{\nu_2}^{\mu,h}(t)+k|_\mathbb{R}\}\right)^p \,  d\mu(t), \nonumber
    \end{align*}
where
\begin{equation}\label{eq: embedding space}
  L^p(\mathbb{S}^1,d\mu):=\{f: \mathbb{S}^1 \to \mathbb{R} \mid \quad \|f\|_{L^p(\mathbb{S}^1,d\mu)}:=\left(\int_{\mathbb{S}^1}|f(t)|_{\mathbb{S}^1}^p \, d\mu(t)\right)^{1/p} <\infty\}.  
\end{equation}
If $\mu=Unif(\mathbb{S}^1)$, we use the notation $L^p(\mathbb{S}^1):=L^p(\mathbb{S}^1,d\mu)$.
\end{definition}

The embedding $ \nu \mapsto \widehat{\nu} $ as outlined by  \eqref{eq: embedding general} is consistent with the definition of the Logarithm function given in \cite[Definition 2.7]{sarrazin2023linearized} (we also refer to \cite{wang2013linear} for the LOT framework). However, the emphasis of the embedding in this paper is on computational efficiency, and a closed-form solution is provided. Additional details are available in Appendix \ref{sec: appendix log exp}.

\begin{remark}
If the reference measure is $\mu= Unif(\mathbb{S}^1)$, given a target measure
$\nu\in \mathcal{P}(\mathbb{S}^1)$, we denote the LCOT-Embedding $\widehat{\nu}^{\mu,h}$ of $\nu$ with respect to the cost $COT_2(\cdot,\cdot)$ (i.e., $h(x)=|x|^2$) simply by $\widehat{\nu}$. Due to Theorem \ref{thm: cot embedding} and \eqref{eq: alpha mean}, the expression \ref{eq: embedding general} reduces to
    \begin{equation}\label{eq: embedding}
        \widehat{\nu}(x) := F_\nu^{-1}\left(x-\mathbb{E}(\nu)+\frac{1}{2}\right)-x, \qquad x\in [0,1).
    \end{equation}
In this setting, we denote $LCOT_{\mu,h}(\cdot,\cdot)$ simply by $LCOT(\cdot,\cdot)$. That is, given $\nu_1,\nu_2\in\mathcal{P}(\mathbb{S}^1)$, \begin{align}\label{eq: LCOT1}
LCOT({\nu_1},{\nu_2}) &:=\|\widehat{\nu_1}-\widehat{\nu_2}\|_{L^2(\mathbb{S}^1)}^2 
=
\int_{0}^1\left(\min_{k\in\mathbb{Z}}\{|\widehat{\nu_1}(t)-\widehat{\nu_2}(t)+k|_\mathbb{R}\}\right)^2\, dt . 
\end{align}

All our experiments are performed using the embedding $\widehat{\nu}$ given by \ref{eq: embedding} due to the robustness of the closed-form formula \ref{eq: alpha mean} for the minimizer $\alpha_{\mu,\nu}$ of \eqref{eq: cot alpha} when $h(x)=|x|^2$ and $\mu= Unif(\mathbb{S}^1)$. 
\end{remark}

\begin{remark}\label{remark: COT vs LCOT}
Let $\mu\in\mathcal{P}(\mathbb{S}^1)$ be absolutely continuous with respect to the Lebesgue measure on $\mathbb{S}^1$. Given $\nu\in\mathcal{P}(\mathbb{S}^1)$. 
\begin{align*}
  COT_{h}(\mu,\nu) &= 
\int_{0}^1 h\left(|\widehat{\nu}^{\mu,h}(t)|_{\mathbb{S}^1}\right)\, dt
=\int_{0}^1 h\left(|\widehat{\nu}^{\mu,h}(t)-\widehat{\mu}^{\mu,h}(t)|_{\mathbb{S}^1}\right) \, dt
=LCOT_{\mu,h}(\mu,\nu).  
\end{align*}

In particular,
$$COT_2(\mu,\nu) = \|\widehat{\nu}\|^2_{L^2(\mathbb{S}^1)}=\|\widehat{\nu}-\widehat{\mu}\|^2_{L^2(\mathbb{S}^1)}=LCOT(\mu,\nu).$$
\end{remark}

\begin{proposition}[Invertibility of the LCOT-Embedding.]
    Let $\mu\in \mathcal{P}(\mathbb{S}^1)$ be absolutely continuous with respect to the Lebesgue measure on $\mathbb{S}^1$, and let $\nu\in \mathcal{P}(\mathbb{S}^1)$. Then,
    \begin{equation*}\label{eq: inverse transform}
        \nu=(\widehat{\nu}^{\mu,h}+\mathrm{id})_\#\mu.
    \end{equation*}
\end{proposition}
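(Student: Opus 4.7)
My plan is to identify the map $\widehat{\nu}^{\mu,h}+\mathrm{id}$ with the optimal circular transportation map $M_\mu^\nu$ from $\mu$ to $\nu$, whose existence and explicit form were established in Theorem~\ref{thm: cot embedding}. Since $M_\mu^\nu$ is by construction a Monge map from $\mu$ to $\nu$, it satisfies $(M_\mu^\nu)_\#\mu=\nu$ by definition, so the desired equality will follow as soon as the two maps are shown to coincide.

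First I would unpack the second expression in Definition~\ref{def: embedding}: for every $x\in[0,1)$,
$$\widehat{\nu}^{\mu,h}(x)=F_\nu^{-1}\!\bigl(F_\mu(x)-\alpha_{\mu,\nu}\bigr)-x.$$
Adding $x=\mathrm{id}(x)$ to both sides gives
$$\bigl(\widehat{\nu}^{\mu,h}+\mathrm{id}\bigr)(x)=F_\nu^{-1}\!\bigl(F_\mu(x)-\alpha_{\mu,\nu}\bigr),$$
and by part~2 of Theorem~\ref{thm: cot embedding} (equation~\eqref{eq: mong with alpha}) the right-hand side is exactly $M_\mu^\nu(x)$. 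Hence $\widehat{\nu}^{\mu,h}+\mathrm{id}=M_\mu^\nu$ as maps on $\mathbb{S}^1$, and therefore
$$\bigl(\widehat{\nu}^{\mu,h}+\mathrm{id}\bigr)_\#\mu=(M_\mu^\nu)_\#\mu=\nu,$$
which is the claim. One could equally well start from the first formula in~\eqref{eq: embedding general} and combine it with the expression~\eqref{eq: circ monge map} for $M_\mu^\nu$ in terms of a cutting point $x_{cut}$; part~3 of Theorem~\ref{thm: cot embedding} guarantees that the resulting map is well-defined independently of the choice of minimizer.

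The main obstacle is a bookkeeping one rather than a conceptual one: I need to be careful to reconcile ordinary addition on $\mathbb{R}$ with the group operation on $\mathbb{S}^1=\mathbb{R}/\mathbb{Z}$. The quantity $\widehat{\nu}^{\mu,h}(x)+x$ is a priori a real number that need not lie in $[0,1)$, and $F_\mu(x)-\alpha_{\mu,\nu}$ can take values outside $[0,1]$ as well; both are to be interpreted modulo $\mathbb{Z}$, using the $1$-periodic extensions of measures and CDFs introduced in \eqref{eq: extended measure on R}--\eqref{eq: extended CDF}. Once this convention is stated, the equality $\widehat{\nu}^{\mu,h}+\mathrm{id}=M_\mu^\nu$ holds literally as an equality of maps $\mathbb{S}^1\to\mathbb{S}^1$, and the pushforward identity becomes a tautology from the Monge property. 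As a sanity check in the canonical case $\mu=\mathrm{Unif}(\mathbb{S}^1)$, $h(x)=|x|^2$, one can also verify the claim directly from~\eqref{eq: embedding} by the change of variables $y=F_\nu^{-1}(x-\mathbb{E}(\nu)+1/2)$, which pushes the uniform measure on $[0,1)$ forward to $\nu$.
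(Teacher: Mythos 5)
Your proof is correct and is essentially the paper's own argument: the paper gives no separate proof of this proposition precisely because, as you observe, Definition \ref{def: embedding} together with \eqref{eq: mong with alpha} identifies $\widehat{\nu}^{\mu,h}+\mathrm{id}=M_\mu^\nu$, and the pushforward identity $(M_\mu^\nu)_\#\mu=\nu$ is exactly what part 1 of Theorem \ref{thm: cot embedding} establishes. Your attention to the modulo-$\mathbb{Z}$ bookkeeping also matches the paper's convention of working with $1$-periodic extensions of measures and CDFs, so nothing is missing.
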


We refer to Prop. \ref{prop: properties of embedding} in the Appendix for more properties of the LCOT-Embedding.

\begin{theorem}\label{thm: distance}
Let $\mu\in \mathcal{P}(\mathbb{S}^1)$ be absolutely continuous with respect to the Lebesgue measure on $\mathbb{S}^1$, and let $h(x)=|x|^p$, for $1\leq p<\infty$. 
Then 
    $LCOT_{\mu,p}(\cdot,\cdot)^{1/p}$ is a distance on $\mathcal{P}(\mathbb{S}^1)$. In particular,     $LCOT(\cdot,\cdot)^{1/2}$ is a distance on $\mathcal{P}(\mathbb{S}^1)$.
\end{theorem}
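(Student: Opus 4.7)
The plan is to verify the four standard axioms of a distance: non-negativity, symmetry, the triangle inequality, and identity of indiscernibles. Non-negativity is immediate since the integrand is a power of a minimum of non-negative quantities; symmetry follows because the set $\{|\widehat{\nu_1}^{\mu,h}(t)-\widehat{\nu_2}^{\mu,h}(t)+k|_\mathbb{R} : k\in\mathbb{Z}\}$ is unchanged (up to sign of $k$) when $\nu_1$ and $\nu_2$ are swapped. The two substantive steps are the triangle inequality and identity of indiscernibles.

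For the triangle inequality, the key observation is that the pointwise object
$$d_{\mathbb{S}^1}\bigl(\widehat{\nu_1}^{\mu,h}(t),\widehat{\nu_2}^{\mu,h}(t)\bigr) \;:=\; \min_{k\in\mathbb{Z}}\bigl|\widehat{\nu_1}^{\mu,h}(t)-\widehat{\nu_2}^{\mu,h}(t)+k\bigr|_{\mathbb{R}}$$
is nothing other than the canonical circle metric evaluated on the embeddings of the targets at $t$, since the circle metric $|\cdot-\cdot|_{\mathbb{S}^1}$ was defined in Section~\ref{sec: background} precisely as such a minimum over integer shifts. In particular it satisfies the triangle inequality pointwise for every $t$. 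Therefore, writing the LCOT pseudo-distance as
$$LCOT_{\mu,p}(\nu_1,\nu_2)^{1/p} \;=\; \bigl\| d_{\mathbb{S}^1}(\widehat{\nu_1}^{\mu,h},\widehat{\nu_2}^{\mu,h})\bigr\|_{L^p(\mathbb{S}^1,d\mu)},$$
I would combine the pointwise triangle inequality with Minkowski's inequality in $L^p(\mathbb{S}^1,d\mu)$ applied to the non-negative real-valued functions $d_{\mathbb{S}^1}(\widehat{\nu_1}^{\mu,h},\widehat{\nu_2}^{\mu,h})$ and $d_{\mathbb{S}^1}(\widehat{\nu_2}^{\mu,h},\widehat{\nu_3}^{\mu,h})$. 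This yields $LCOT_{\mu,p}(\nu_1,\nu_3)^{1/p}\leq LCOT_{\mu,p}(\nu_1,\nu_2)^{1/p}+LCOT_{\mu,p}(\nu_2,\nu_3)^{1/p}$.

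For identity of indiscernibles, the forward direction ($\nu_1=\nu_2 \Rightarrow LCOT_{\mu,p}(\nu_1,\nu_2)=0$) follows at once from the uniqueness of the optimal Monge map asserted in Theorem~\ref{thm: cot embedding}(4): the embedding $\widehat{\nu}^{\mu,h}$ is uniquely determined by $\nu$, so the integrand vanishes identically. For the converse, suppose $LCOT_{\mu,p}(\nu_1,\nu_2)=0$. Then $d_{\mathbb{S}^1}(\widehat{\nu_1}^{\mu,h}(t),\widehat{\nu_2}^{\mu,h}(t))=0$ for $\mu$-a.e.\ $t$, i.e.\ $\widehat{\nu_1}^{\mu,h}(t)\equiv\widehat{\nu_2}^{\mu,h}(t)\pmod{\mathbb{Z}}$ $\mu$-a.e. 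I would then invoke the invertibility of the embedding, which states $\nu_i=(\widehat{\nu_i}^{\mu,h}+\mathrm{id})_\#\mu$: because the pushforward is taken in $\mathcal{P}(\mathbb{S}^1)$, the value of $(\widehat{\nu_i}^{\mu,h}+\mathrm{id})(t)$ matters only modulo $\mathbb{Z}$, so the two pushforwards coincide and $\nu_1=\nu_2$.

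The step I expect to require the most care is this last one: one must be precise that the embedding is $\mathbb{R}$-valued while the target circle identifies points modulo $\mathbb{Z}$, so equality mod $\mathbb{Z}$ of $\widehat{\nu_1}^{\mu,h}$ and $\widehat{\nu_2}^{\mu,h}$ as functions is exactly what is needed for the pushforwards on $\mathbb{S}^1$ to match. Once this point is handled cleanly by combining the $\mu$-a.e.\ equality mod $\mathbb{Z}$ with the invertibility proposition, the proof closes. The triangle inequality, by contrast, reduces to a textbook Minkowski argument after the pointwise circle-metric observation.
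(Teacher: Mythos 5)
Your proposal is correct and follows essentially the same route as the paper's proof: symmetry and non-negativity are immediate, the forward implication uses uniqueness of the Monge map from Theorem~\ref{thm: cot embedding}, the converse combines $\mu$-a.e.\ equality modulo $\mathbb{Z}$ of the embeddings with the pushforward identity $\nu_i=(\widehat{\nu_i}^{\mu,h}+\mathrm{id})_\#\mu$ (the paper makes this explicit via a full-measure set $S$ on which the maps agree), and the triangle inequality is the pointwise circle-metric inequality followed by Minkowski in $L^p(\mathbb{S}^1,d\mu)$. No gaps.
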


\subsection{LCOT interpolation between circular measures}

Given a COT Monge map and the LCOT embedding, we can compute a linear interpolation between circular measures (refer to \cite{wang2013linear} for a similar approach on the Euclidean setting). First, for arbitrary measures $\sigma,\nu \in \mathcal{P}(\mathbb{S}^1)$ the COT interpolation can be written as:
\begin{equation}\label{eq: interpol COT}
        \rho^{COT}_t:=\left((1-t)\mathrm{id}+tM_{\sigma}^\nu\right)_\#\sigma, \qquad t\in[0,1].
    \end{equation}
Similarly, for a fixed reference measure $\mu\in \mathcal{P}(\mathbb{S}^1)$, we can write the LCOT interpolation as:
\begin{equation}\label{eq: interpol LCOT}
        \rho^{LCOT}_t :=\left((1-t)(\widehat{\sigma}+\mathrm{id})+t(\widehat{\nu}+\mathrm{id})\right)_{\#}\mu, \qquad t\in [0,1],
    \end{equation}
where we have $\rho^{COT}_{t=0}=\rho^{LCOT}_{t=0}=\sigma$ and $\rho^{COT}_{t=1}=\rho^{LCOT}_{t=1}=\nu$. In Figure \ref{fig:LCOT}, we show such interpolations between the reference measure $\mu$ and two arbitrary measures $\nu_1$ and $\nu_2$ for COT and LCOT. As can be seen, the COT and LCOT interpolations between $\mu$ and $\nu_i$s coincide (by definition), while the interpolation between $\nu_1$ and $\nu_2$ is different for the two methods. We also provide an illustration of the logarithmic and exponential maps to, and from, the LCOT embedding.

\begin{figure}[ht]
    \centering
    \includegraphics[width=1\linewidth]{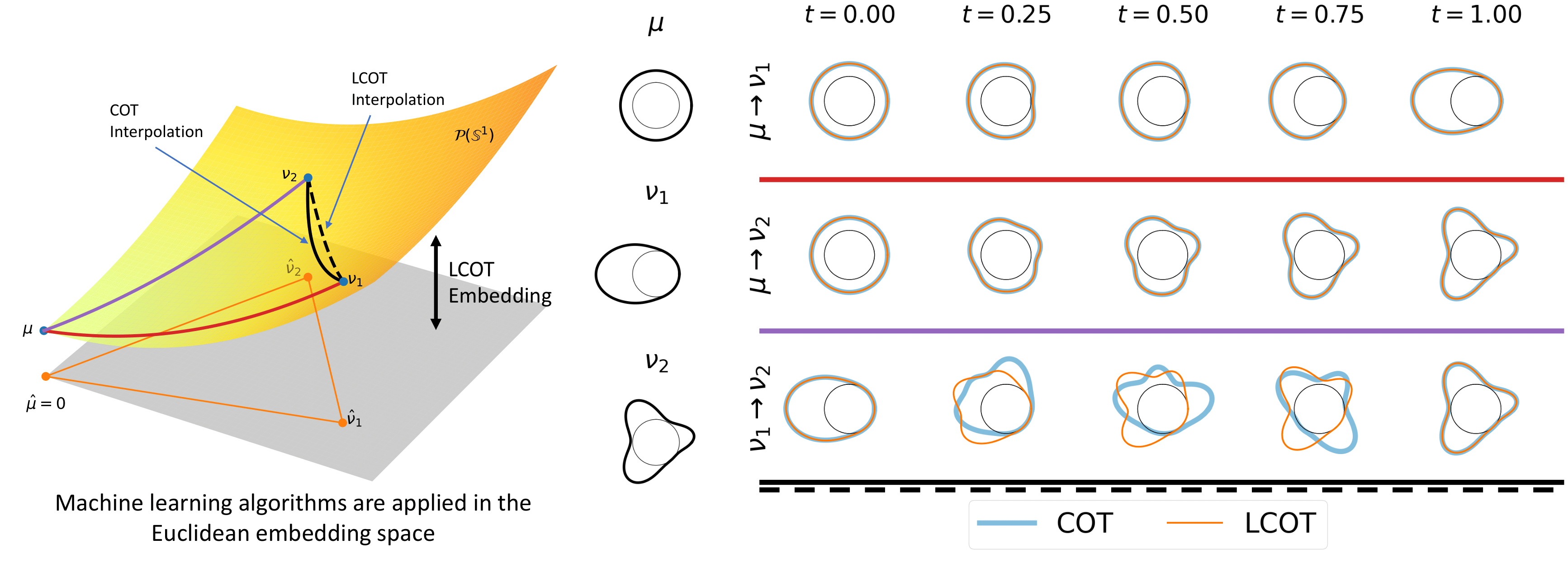}
    \caption{Left: Illustration of the LCOT embedding, the linearization process (logarithmic map), and measure interpolations. Right: Pairwise interpolations between reference measure $\mu$ and measures $\nu_1$ and $\nu_2$, using formulas in \ref{eq: interpol COT} (COT) and \ref{eq: interpol LCOT} (LCOT).}
    \label{fig:LCOT}
\end{figure}
\newpage

\subsection{Time complexity of Linear COT distance between discrete measures}
\label{subsec: time}


According to \cite[Theorem 6.2]{delon2010fast}, for discrete measures $\nu_1,\nu_2$ with $N_1, N_2$ sorted points, the \emph{binary search} algorithm requires $\mathcal{O}((N_1+N_2)\log (1/\epsilon))$ computational time to find an $\epsilon$-approximate solution for $\alpha_{\nu_1,\nu_2}$. If $M$ is the least common denominator for all probability masses, an exact solution can be obtained in $\mathcal{O}((N_1+N_2) \ln M)$. Then, for a given $\epsilon>0$ and $K$ probability measures, $\{\nu_k\}_{k=1}^K$, each with $N$ points, the total time to pairwise compute the COT distance is $\mathcal{O}(K^2N\ln(1/\epsilon))$. 
For LCOT, when the reference $\mu$ is the Lebesgue measure, the optimal $\alpha_{\mu,\nu_k}$ has a closed-form solution (see  \eqref{eq: alpha mean}) and the time complexity for computing the LCOT embedding via  \eqref{eq: embedding} is $\mathcal{O}(N)$. 
The LCOT distance calculation between $\nu_i$ and $\nu_j$ according to  \eqref{eq: LCOT1} requires $\mathcal{O}(N)$ computations. Hence, the total time for pairwise LCOT distance computation between $K$ probability measures, $\{\nu_k\}_{k=1}^K$, each with $N$ points, would be $\mathcal{O}(K^2N+KN)$. See Appendix \ref{sec: appendix numerics} for further explanation.

To verify these time complexities, we evaluate the computational time for COT and LCOT algorithms and present the results in Figure \ref{fig:time_plot}. We generate $K$ random discrete measures, $\{\nu_k\}_{k=1}^K$, each with $N$ samples, and for the reference measure, $\mu$, we choose: 1) the uniform discrete measure, and 2) a random discrete measure, both with $N_0=N$ samples. To calculate $\alpha_{\mu,\nu_k}$, we considered the two scenarios, using the binary search \cite{delon2010fast} for the non-uniform reference, and using \eqref{eq: alpha mean} for the uniform reference. We labeled them as, ``uniform ref.'' and ``non-uniform ref.'' Then, in our first experiment, we set $K=2$ and measured the wall-clock time for calculating COT and LCOT while varying $N\in\{100,200,\ldots, 20000\}$. For our second experiment, and for $N\in\{500,1000,5000\}$, we vary $K\in\{2,4,6,\ldots, 64\}$ and measure the total time for calculating pairwise COT and LCOT distances. The computational benefit of LCOT is evident from Figure \ref{fig:time_plot}.

\begin{figure}[ht]
    \centering
    \includegraphics[width=0.95\textwidth]{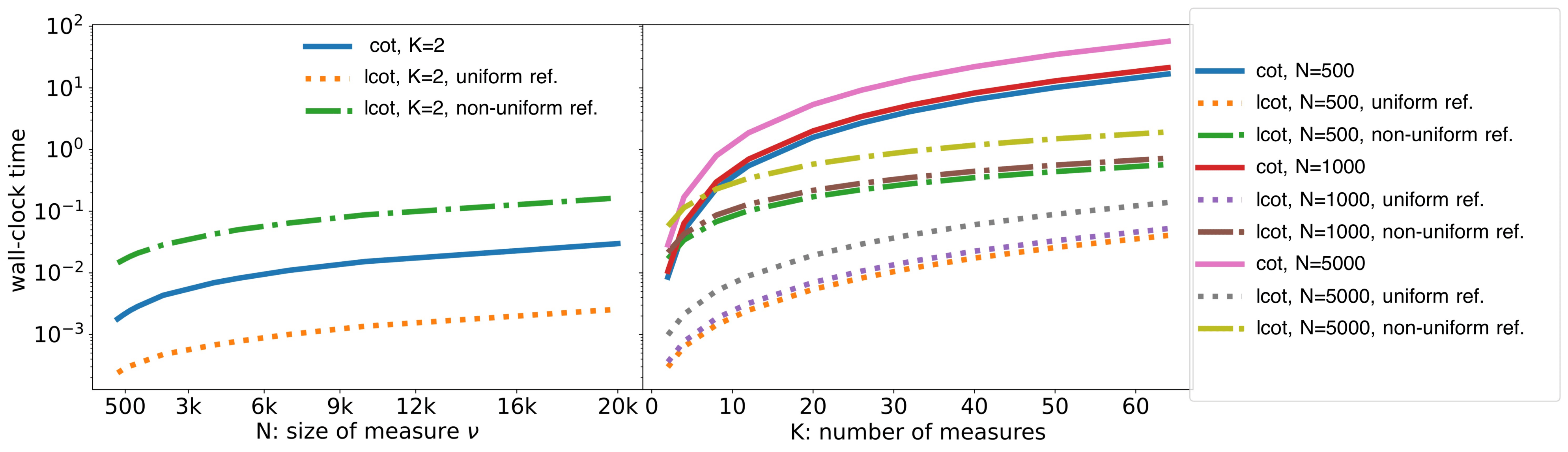}    
    \caption{Computational time analysis of COT and LCOT, for pairwise comparison of $K$ discrete measures, each with $N$ samples. Left: Wall-clock time for $K=2$ and $N\in\{500, 1000, \ldots, 5000\}$. Right: Wall-clock time for $N\in\{500, 1000, 5000\}$, and $K\in \{2,4,6,\ldots,64\}$. Solid lines are COT, dotted are LCOT with a uniform reference and dash-dotted are LCOT with a non-uniform reference.}
    \label{fig:time_plot}
\end{figure}

\section{Experiments}
\label{sec: experiments}


To better understand the geometry induced by the LCOT metric, we perform Multidimensional Scaling (MDS) \cite{kruskal1964multidimensional} on a family of densities, where the discrepancy matrices are computed using LCOT, COT, OT (with a fixed cutting point), and the Euclidean distance. 

\smallskip

{\bf Experiment 1.} We generate three families of circular densities, calculate pairwise distances between them, and depict their MDS embedding in Figure \ref{fig: mds 3 classes}. In short, the densities are chosen as follows; we start with two initial densities: (1) a von Mises centered at the south pole of the circle ($\mu$=0.5), (2) a bimodal von Mises centered at the east ($\mu$=0.25) and west ($\mu$=0.75) ends of the circle. Then, we create 20 equally distant circular translations of each of these densities to capture the geometry of the circle. Finally, we parametrize the COT geodesic between the initial densities and generate 20 extra densities on the geodesic. Figure \ref{fig: mds 3 classes} shows these densities in green, blue, and red, respectively. 
The representations given by the MDS visualizations show that LCOT and COT capture the geometry of the circle coded in the translation property in an intuitive fashion. In contrast, OT and Euclidean distances do not capture the underlying geometry of the problem. 

\smallskip

{\bf Experiment 2.} To assess the separability properties of the LCOT embedding, we follow a similar experiment design as in \cite{landler2021advice}. 
We consider six groups of circular density functions as in the third row of Figure \ref{fig: mds 3 classes}: unimodal von Mises (axial: 0$^{\circ}$), wrapped skew-normal, symmetric bimodal von Mises (axial: 0$^{\circ}$ and 180$^{\circ}$), asymmetric bimodal von Mises (axial: 0$^{\circ}$ and 120$^{\circ}$), symmetric trimodal von Mises (axial: 0$^{\circ}$, 120$^{\circ}$ and 240$^{\circ}$), asymmetric trimodal von Mises (axial: 0$^{\circ}$, 240$^{\circ}$ and 225$^{\circ}$). 
We assign a von Mises distribution with a small spread ($\kappa=200$) to each distribution's axis/axes to introduce random perturbations of these distributions.  
We generate 20 sets of simulated densities and sample each with 50-100 samples. Following the computation of pairwise distances among the sets of samples using LCOT, COT, OT, and Euclidean methods, we again employ MDS to visualize the separability of each approach across the six circular density classes mentioned above. The outcomes are presented in the bottom row of Figure \ref{fig: mds 3 classes}. It can be seen that LCOT stands out for its superior clustering outcomes, featuring distinct boundaries between the actual classes, outperforming the other methods.

\smallskip

{\bf Experiment 3.} In our last experiment, we consider the calculation of the barycenter of circular densities. Building upon Experiments 1 and 2, we generated unimodal, bimodal, and trimodal von Mises distributions. For each distribution's axis/axes, we assigned a von Mises distribution with a small spread ($\kappa=200$) to introduce random perturbations. These distributions are shown in Figure \ref{fig: bary} (left). Subsequently, we computed both the Euclidean average of these densities and the LCOT barycenter. Notably, unlike COT, the invertible nature of the LCOT embedding allows us to directly calculate the barycenter as the inverse of the embedded distributions' average. The resulting barycenters are illustrated in Fig. \ref{fig: bary}. As observed, the LCOT method accurately recovers the correct barycenter without necessitating additional optimization steps.


\begin{figure}[ht]
    \centering
    \includegraphics[width=1\linewidth]{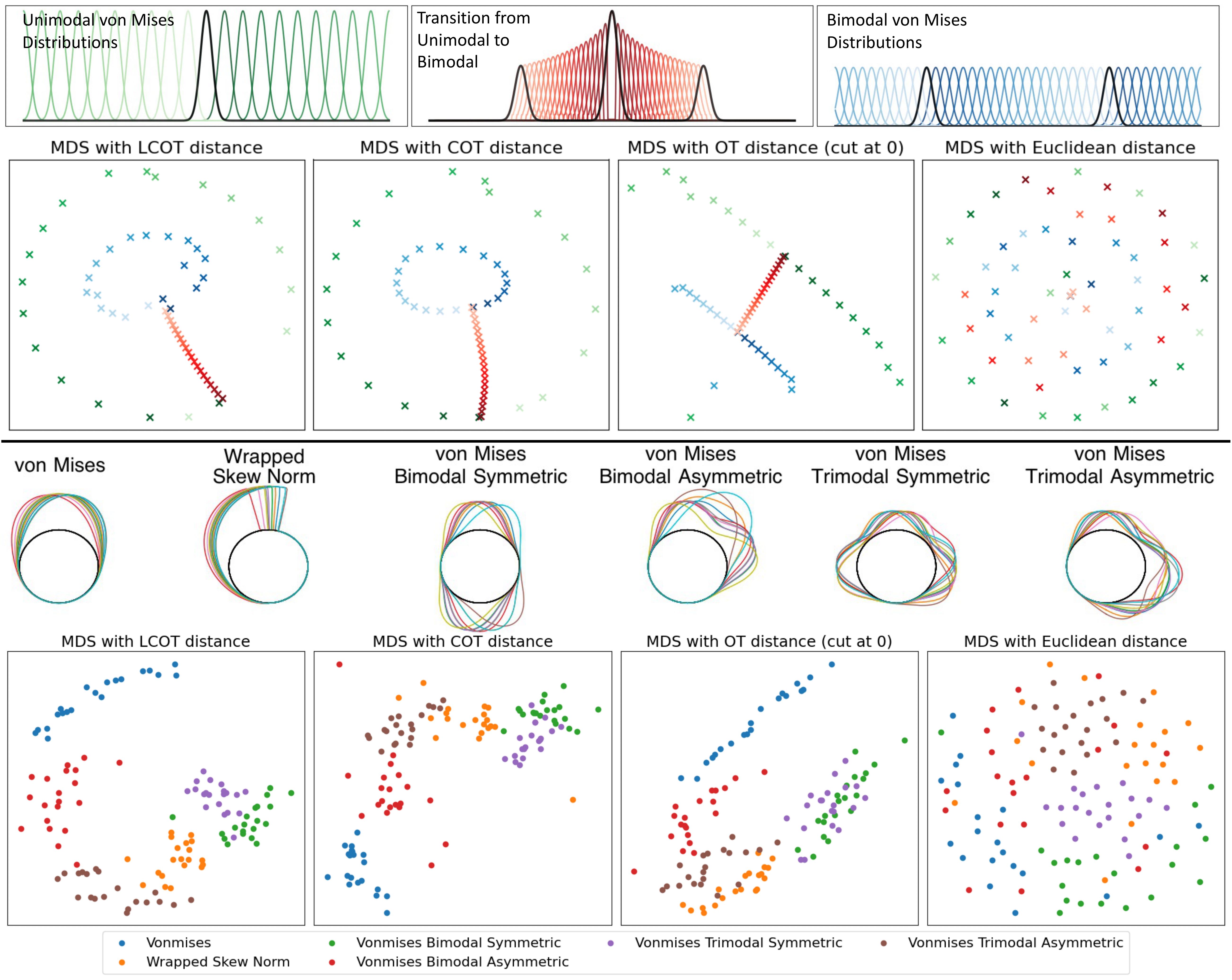}
    \caption{MDS for embedding classes of probability densities into an Euclidean space of dimension $2$ where the original pair-wise distances (COT-distance, LOT-distance, Euclidean or $L^2$-distance) are preserved as well as possible. }
    \label{fig: mds 3 classes}
\end{figure}

\begin{figure}[ht]
    \centering
    \includegraphics[width=\linewidth]{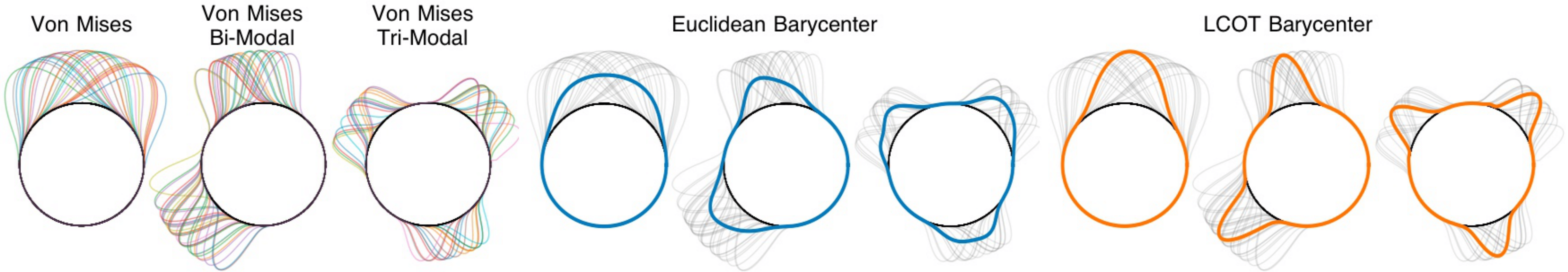}
    \caption{The LCOT barycenter compared to the Euclidean mean.}
    \label{fig: bary}
\end{figure}


\section{Conclusion and discussion}\label{sec: conclusion}

In this paper, we present the Linear Circular Optimal Transport (LCOT) distance, a new metric for circular measures derived from the Linear Optimal Transport (LOT) framework \cite{wang2013linear,kolouri2016continuous,park2018cumulative,cai2022linearized,aldroubi2021signed,moosmuller2023linear}. The LCOT offers 1) notable computational benefits over the COT metric, particularly in pairwise comparisons of numerous measures, and 2) a linear embedding where the $\|\cdot\|_{L^2(\mathbb{S}^1)}$ between embedded distributions equates to the LCOT metric. We consolidated scattered results on circular OT into Theorem \ref{thm: cot embedding} and introduced the LCOT metric and embedding, validating LCOT as a metric in Theorem \ref{thm: distance}. In Section \ref{subsec: time}, we assess LCOT's computational complexity for pairwise comparisons of $K$ circular measures, juxtaposing it with COT. We conclude by showcasing LCOT's empirical strengths via MDS embeddings on circular densities using different metrics.

\section*{Acknowledgement}
SK acknowledges partial support from the Defense Advanced
Research Projects Agency (DARPA) under Contract No. HR00112190135 and HR00112090023, and the Wellcome LEAP Foundation. GKR acknowledges support from ONR N000142212505, and NIH GM130825.

\bibliography{ref}

\begin{thebibliography}{10}\itemsep=-1pt

\bibitem{aldroubi2021partitioning}
Akram Aldroubi, Shiying Li, and Gustavo~K Rohde.
\newblock Partitioning signal classes using transport transforms for data analysis and machine learning.
\newblock {\em Sampling theory, signal processing, and data analysis}, 19(1):6, 2021.

\bibitem{aldroubi2021signed}
Akram Aldroubi, Rocio~Diaz Martin, Ivan Medri, Gustavo~K Rohde, and Sumati Thareja.
\newblock The signed cumulative distribution transform for 1-d signal analysis and classification.
\newblock {\em Foundations of Data Science}, 4:137--163, 2022.

\bibitem{alvarez2020geometric}
David Alvarez-Melis and Nicolo Fusi.
\newblock Geometric dataset distances via optimal transport.
\newblock {\em Advances in Neural Information Processing Systems}, 33:21428--21439, 2020.

\bibitem{arjovsky2017wasserstein}
Martin Arjovsky, Soumith Chintala, and L{\'e}on Bottou.
\newblock {Wasserstein} generative adversarial networks.
\newblock In {\em International conference on machine learning}, pages 214--223. PMLR, 2017.

\bibitem{bai2022sliced}
Yikun Bai, Bernhard Schmitzer, Mathew Thorpe, and Soheil Kolouri.
\newblock Sliced optimal partial transport.
\newblock {\em arXiv preprint arXiv:2212.08049}, 2022.

\bibitem{belongie2000shape}
Serge Belongie, Jitendra Malik, and Jan Puzicha.
\newblock Shape context: A new descriptor for shape matching and object recognition.
\newblock {\em Advances in neural information processing systems}, 13, 2000.

\bibitem{bonet2022spherical}
Cl{\'e}ment Bonet, Paul Berg, Nicolas Courty, Fran{\c{c}}ois Septier, Lucas Drumetz, and Minh-Tan Pham.
\newblock Spherical sliced-wasserstein.
\newblock {\em ICLR}, 2023.

\bibitem{cabrelli1998linear}
Carlos~A Cabrelli and Ursula~M Molter.
\newblock A linear time algorithm for a matching problem on the circle.
\newblock {\em Information processing letters}, 66(3):161--164, 1998.

\bibitem{cai2022linearized}
Tianji Cai, Junyi Cheng, Bernhard Schmitzer, and Matthew Thorpe.
\newblock The linearized hellinger--kantorovich distance.
\newblock {\em SIAM Journal on Imaging Sciences}, 15(1):45--83, 2022.

\bibitem{cloninger2023linearized}
Alexander Cloninger, Keaton Hamm, Varun Khurana, and Caroline Moosm{\"u}ller.
\newblock Linearized wasserstein dimensionality reduction with approximation guarantees.
\newblock {\em arXiv preprint arXiv:2302.07373}, 2023.

\bibitem{courty2017joint}
Nicolas Courty, R{\'e}mi Flamary, Amaury Habrard, and Alain Rakotomamonjy.
\newblock Joint distribution optimal transportation for domain adaptation.
\newblock {\em Advances in Neural Information Processing Systems}, 30, 2017.

\bibitem{damodaran2018deepjdot}
Bharath~Bhushan Damodaran, Benjamin Kellenberger, R{\'e}mi Flamary, Devis Tuia, and Nicolas Courty.
\newblock Deepjdot: Deep joint distribution optimal transport for unsupervised domain adaptation.
\newblock In {\em Proceedings of the European conference on computer vision (ECCV)}, pages 447--463, 2018.

\bibitem{delon2010fast}
Julie Delon, Julien Salomon, and Andrei Sobolevski.
\newblock Fast transport optimization for monge costs on the circle.
\newblock {\em SIAM Journal on Applied Mathematics}, 70(7):2239--2258, 2010.

\bibitem{frogner2015learning}
Charlie Frogner, Chiyuan Zhang, Hossein Mobahi, Mauricio Araya, and Tomaso~A Poggio.
\newblock Learning with a {Wasserstein} loss.
\newblock {\em Advances in neural information processing systems}, 28, 2015.

\bibitem{haker2004optimal}
Steven Haker, Lei Zhu, Allen Tannenbaum, and Sigurd Angenent.
\newblock Optimal mass transport for registration and warping.
\newblock {\em International Journal of computer vision}, 60:225--240, 2004.

\bibitem{ho2017multilevel}
Nhat Ho, XuanLong Nguyen, Mikhail Yurochkin, Hung~Hai Bui, Viet Huynh, and Dinh Phung.
\newblock Multilevel clustering via wasserstein means.
\newblock In {\em International conference on machine learning}, pages 1501--1509. PMLR, 2017.

\bibitem{hundrieser2022statistics}
Shayan Hundrieser, Marcel Klatt, and Axel Munk.
\newblock The statistics of circular optimal transport.
\newblock In {\em Directional Statistics for Innovative Applications: A Bicentennial Tribute to Florence Nightingale}, pages 57--82. Springer, 2022.

\bibitem{jammalamadaka2001topics}
S~Rao Jammalamadaka and Ashis SenGupta.
\newblock {\em Topics in circular statistics}, volume~5.
\newblock world scientific, 2001.

\bibitem{khamis2023earth}
Abdelwahed Khamis, Russell Tsuchida, Mohamed Tarek, Vivien Rolland, and Lars Petersson.
\newblock Earth movers in the big data era: A review of optimal transport in machine learning.
\newblock {\em arXiv preprint arXiv:2305.05080}, 2023.

\bibitem{kolouri2017optimal}
Soheil Kolouri, Se~Rim Park, Matthew Thorpe, Dejan Slepcev, and Gustavo~K Rohde.
\newblock Optimal mass transport: Signal processing and machine-learning applications.
\newblock {\em IEEE signal processing magazine}, 34(4):43--59, 2017.

\bibitem{kolouri2018sliced2}
Soheil Kolouri, Phillip~E Pope, Charles~E Martin, and Gustavo~K Rohde.
\newblock Sliced wasserstein auto-encoders.
\newblock In {\em International Conference on Learning Representations}, 2018.

\bibitem{kolouri2016continuous}
Soheil Kolouri, Akif~B Tosun, John~A Ozolek, and Gustavo~K Rohde.
\newblock A continuous linear optimal transport approach for pattern analysis in image datasets.
\newblock {\em Pattern recognition}, 51:453--462, 2016.

\bibitem{kruskal1964multidimensional}
Joseph~B Kruskal.
\newblock Multidimensional scaling by optimizing goodness of fit to a nonmetric hypothesis.
\newblock {\em Psychometrika}, 29(1):1--27, 1964.

\bibitem{landler2021advice}
Lukas Landler, Graeme~D Ruxton, and E~Pascal Malkemper.
\newblock Advice on comparing two independent samples of circular data in biology.
\newblock {\em Scientific reports}, 11(1):20337, 2021.

\bibitem{le2023diffeomorphic}
Tung Le, Khai Nguyen, Shanlin Sun, Kun Han, Nhat Ho, and Xiaohui Xie.
\newblock Diffeomorphic deformation via sliced wasserstein distance optimization for cortical surface reconstruction.
\newblock {\em arXiv preprint arXiv:2305.17555}, 2023.

\bibitem{levine2002signal}
Joel~D Levine, Pablo Funes, Harold~B Dowse, and Jeffrey~C Hall.
\newblock Signal analysis of behavioral and molecular cycles.
\newblock {\em BMC neuroscience}, 3(1):1--25, 2002.

\bibitem{liu2022wasserstein}
Xinran Liu, Yikun Bai, Yuzhe Lu, Andrea Soltoggio, and Soheil Kolouri.
\newblock Wasserstein task embedding for measuring task similarities.
\newblock {\em arXiv preprint arXiv:2208.11726}, 2022.

\bibitem{lowe2004distinctive}
David~G Lowe.
\newblock Distinctive image features from scale-invariant keypoints.
\newblock {\em International journal of computer vision}, 60:91--110, 2004.

\bibitem{mccann2001polar}
Robert~J McCann.
\newblock Polar factorization of maps on riemannian manifolds.
\newblock {\em Geometric \& Functional Analysis GAFA}, 11(3):589--608, 2001.

\bibitem{moosmuller2023linear}
Caroline Moosm{\"u}ller and Alexander Cloninger.
\newblock Linear optimal transport embedding: provable wasserstein classification for certain rigid transformations and perturbations.
\newblock {\em Information and Inference: A Journal of the IMA}, 12(1):363--389, 2023.

\bibitem{mukherjee2021end}
Subhadip Mukherjee, Marcello Carioni, Ozan {\"O}ktem, and Carola-Bibiane Sch{\"o}nlieb.
\newblock End-to-end reconstruction meets data-driven regularization for inverse problems.
\newblock {\em Advances in Neural Information Processing Systems}, 34:21413--21425, 2021.

\bibitem{park2018cumulative}
Se~Rim Park, Soheil Kolouri, Shinjini Kundu, and Gustavo~K Rohde.
\newblock The cumulative distribution transform and linear pattern classification.
\newblock {\em Applied and computational harmonic analysis}, 45(3):616--641, 2018.

\bibitem{peyre2019computational}
Gabriel Peyr{\'e}, Marco Cuturi, et~al.
\newblock Computational optimal transport: With applications to data science.
\newblock {\em Foundations and Trends{\textregistered} in Machine Learning}, 11(5-6):355--607, 2019.

\bibitem{rabin2011}
Julien Rabin, Julie Delon, and Yann Gousseau.
\newblock Transportation distances on the circle.
\newblock {\em Journal of Mathematical Imaging and Vision}, 41:147–--167, 2011.

\bibitem{sant2015}
F. Santambrogio.
\newblock {\em Optimal Transport for Applied Mathematicians. Calculus of Variations, PDEs and Modeling}.
\newblock Birkhäuser, 2015.

\bibitem{sarrazin2023linearized}
Cl{\'e}ment Sarrazin and Bernhard Schmitzer.
\newblock Linearized optimal transport on manifolds.
\newblock {\em arXiv preprint arXiv:2303.13901}, 2023.

\bibitem{tolstikhin2017wasserstein}
Ilya Tolstikhin, Olivier Bousquet, Sylvain Gelly, and Bernhard Schoelkopf.
\newblock {Wasserstein} auto-encoders.
\newblock {\em arXiv preprint arXiv:1711.01558}, 2017.

\bibitem{twiss1992structural}
Robert~J Twiss and Eldridge~M Moores.
\newblock {\em Structural geology}.
\newblock Macmillan, 1992.

\bibitem{villani2009optimal}
Cedric Villani.
\newblock {\em Optimal transport: old and new}.
\newblock Springer, 2009.

\bibitem{wang2013linear}
Wei Wang, Dejan Slep{\v{c}}ev, Saurav Basu, John~A Ozolek, and Gustavo~K Rohde.
\newblock A linear optimal transportation framework for quantifying and visualizing variations in sets of images.
\newblock {\em International journal of computer vision}, 101:254--269, 2013.

\end{thebibliography}
\bibliographystyle{ieee_fullname}

\newpage
\appendix
\section{Appendix}\label{sec: Appendix}

\subsection{Proofs}\label{sec: Appendix proofs}

\begin{proof}[Proof of Proposition \ref{prop: equivalent COT}]
  The proof of Proposition \ref{prop: equivalent COT} is provided in \cite{delon2010fast} for the optimal coupling for any pair of probability measures on $\mathbb{S}^1$. For the particular and enlightening case of discrete probability measures on $\mathbb{S}^1$, we refer the reader to \cite{rabin2011}.  

For completeness, notice that the relation between $x_0$ and $\alpha$ hols by changing variables, using 1-periodicity of $\mu$ and $\nu$ and Definition \ref{def: quantile} (see also \cite[Proposition 1]{bonet2022spherical}):
  \begin{align*}
\int_{0}^1h(|F_{\mu,x_0}^{-1}(x)&-F_{\nu,x_0}^{-1}(x)|_{\mathbb{R}}) \, dx\\ &=\int_0^1h(|(F_{\mu}(\cdot+x_0)-F_\mu(x_0))^{-1}(x)-(F_{\nu}(\cdot+x_0)-F_\nu(x_0))^{-1}(x)|_{\mathbb{R}}) \, dx \\
&=\int_0^1h(|(F_{\mu}-F_\mu(x_0))^{-1}(x)-(F_{\nu}-F_\nu(x_0))^{-1}(x)|_{\mathbb{R}}) \, dx \\
&=\int_0^1h(|F_{\mu}^{-1}(x+F_\mu(x_0))-F_{\nu}^{-1}(x+F_\nu(x_0))|_{\mathbb{R}}) \, dx \\
&=\int_0^1h(|F_{\mu}^{-1}(x+\underbrace{F_\mu(x_0)-F_\nu(x_0)}_{\alpha})-F_{\nu}^{-1}(x)|_{\mathbb{R}}) \, dx 
\end{align*}   

In particular, if $h(x)=|x|^2$, and $\mu= Unif(\mathbb{S}^1)$, then 
\begin{align*}
    COT_2(\mu,\nu)&=\inf_{\alpha\in\mathbb{R}}\int_0^1|F_{\mu}^{-1}(x+{\alpha})-F_{\nu}^{-1}(x)|_{\mathbb{R}}^2 \, dx \\
    &=\inf_{\alpha\in\mathbb{R}}\int_0^1|x+{\alpha}-F_{\nu}^{-1}(x)|^2 \, dx \\
    &=\inf_{\alpha\in\mathbb{R}}\left(\int_0^1|F_{\nu}^{-1}(x)-x|^2 \, dx -2\alpha\int_0^1(F_{\nu}^{-1}(x)-x) \, dx +\alpha^2\right)\\&=\inf_{\alpha\in\mathbb{R}}\left(\int_0^1|F_{\nu}^{-1}(x)-x|^2 \, dx -2\alpha\left(\int_0^1 x \, d\nu(x)-\frac{1}{2}\right)  +\alpha^2\right)\\
    &=\inf_{\alpha\in\mathbb{R}}\left(\int_0^1|F_{\nu}^{-1}(x)-x|^2 \, dx -2\alpha\left(\mathbb{E}(\nu)-\frac{1}{2}\right) +\alpha^2\right)\\
    &=\int_0^1|F_{\nu}^{-1}(x)-x|^2 \, dx -2{\underbrace{\left(\mathbb{E}(\nu)-\frac{1}{2}\right)}_{\alpha_{\mu,\nu}} }^2.
\end{align*}
\end{proof}

\newpage

\begin{proof}[Proof of Theorem \ref{thm: cot embedding}]\,
    \begin{enumerate}
        \item        
        First, we will show that the map $M_\mu^\nu$ given by \eqref{eq: circ monge map} satisfies $(M_\mu^\nu)_\#\mu=\nu$. Here $\mu$ and $\nu$ are the extended measures form $\mathbb{S}^1$ to $\mathbb{R}$ having CDFs equal to $F_{\mu}$ and $F_{\nu}$, respectively, defined by \eqref{eq: CDF on [0,1)} and \ref{eq: extended CDF}. By choosing the system of coordinates $\widetilde{x}\in[0,1)$ that starts at $x_{cut}$ (see Figure \ref{fig: coord}) then, 
        $$M_\mu^\nu(\widetilde{x})=F_{\nu,x_{cut}}^{-1}\circ F_{\mu,x_{cut}}( \widetilde{x})$$
        (see \eqref{eq: circ monge map from x_0}). Let $\mu_{x_{cut}}$ and $\nu_{x_{cut}}$ be the (1-periodic) measures on $\mathbb{R}$ having CDFs $F_{\mu,x_{cut}}$ and $F_{\nu,x_{cut}}$, respectively, i.e., $F_{\nu,x_{cut}}(\widetilde{x})=\mu_{x_{cut}}([0,\widetilde{x}))$ (analogously for $\nu_{x_{cut}}$). That is, we have unrolled $\mu$ and $\nu$ from $\mathbb{S}^1$ to $\mathbb{R}$, where the origin $0\in\mathbb{R}$ corresponds to $x_{cut}\in\mathbb{S}^1$ (see Figure \ref{fig: cut_pdf}). Thus, a classic computation yields 
        $$(F_{\nu,x_{cut}}^{-1}\circ F_{\mu,x_{cut}})_\#\mu_{x_{cut}}=(F_{\nu,x_{cut}}^{-1})_\# \left( (F_{\mu,x_{cut}})_\# \mu_{x_{cut}}\right)=(F_{\nu,x_{cut}}^{-1})_\#\mathcal{L}_{\mathbb{S}^1}=\nu_{x_{cut}}$$
       where $\mathcal{L}_{\mathbb{S}^1}= Unif(\mathbb{S}^1)$ denotes the Lebesgue 
       measure on the circle. We used that $(F_\mu)_\#\mu=\mathcal{L}_{\mathbb{S}^1}$ as $\mu$ does not
        give mass to atoms, and so, if we change the system of coordinates we also have $(F_{\mu,x_{cut}})_\# \mu_{x_{cut}}=\mathcal{L}_{\mathbb{S}^1}$.

        Finally, we have to switch coordinates. Let $$z(\widetilde{x}):=\widetilde{x}+x_{cut}$$ (that is, $z(\widetilde{x})=x$). To visualize this, see Figure \ref{fig: coord}. It holds that 
        \begin{equation}\label{eq: pushforward change coord}
          z_{\#}\nu_{cut}=\nu  
        \end{equation}
        (where we recall that $\nu$ is the extended measure form $\mathbb{S}^1$ to $\mathbb{R}$ having CDF equal to $F_{\mu}$ as in \eqref{eq: CDF on [0,1)} and \ref{eq: extended CDF}). Let us check this fact for intervals:
        \begin{align*}
            z_\#\nu_{x_{cut}}([a,b])&=\nu_{x_{cut}}(z^{-1}([a,b]))=\nu([z^{-1}(a),z^{-1}(b)])\\&=\nu_{x_{cut}}([a-x_{cut},b-x_{cut}])\\
            &=F_{\nu,{x_{cut}}}(b-x_{cut})-F_{\nu, {x_{cut}}}(a-x_{cut})\\
            &=F_{\nu}(b)-F_\nu(x_{cut})-(F_{\nu}(a)-F_\nu(x_{cut}))\\
            &=F_{\nu}(b)-F_{\nu}(a)\\
            &=\nu([a,b]).
        \end{align*}

        \begin{figure}[ht]
            \centering
            \includegraphics[width=0.4\linewidth]{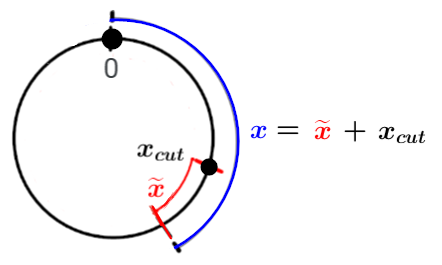}
            \caption{The unit circle (black) can be parametrized as $[0,1)$ in many different ways. In the figure, we marked in black the North Pole as $0$. 
            The canonical parametrization of $\mathbb{S}^1$ identifies the North Pole with $0$.
            Then, also in black, we pick a point $x_{cut}$.  
            The distance in blue $x$ that starts at $0$ equals the distance in red $\widetilde{x}$ that starts at $x_{cut}$ plus the corresponding starting point $x_{cut}$. This allows us to visualize the change of coordinates given by \eqref{eq: change or coord}.    }
            \label{fig: coord}
        \end{figure}

        Besides, it holds that 
        \begin{equation}\label{eq: pushforward F_mu,cut}
        F_{\mu,x_{cut}}(\cdot-x_{cut})_\#\mu = Unif(\mathbb{S}^1),  
        \end{equation}
        in the sense that it is the Lebesgue measure on $\mathbb{S}^1$ extended periodically (with period $1$) to the real line, which we denote by $\mathcal{L}_{\mathbb{S}^1}$. Let us sketch the proof for intervals. First, notice that $F_{\mu,x_{cut}}(x-x_{cut})=F_\mu(x)-F_\mu(x_{cut})$ and so its inverse is $y\mapsto F_{\mu}^{-1}(y+x_{cut})$. Therefore,
        \begin{align*}
            (F_{\mu,x_{cut}}(\cdot-x_{cut}))_\#\mu\left([a,b]\right)&=\mu\left([F_{\mu}^{-1}(a+x_{cut}),F_{\mu}^{-1}(b+x_{cut})]\right)\\
            &=F_\mu(F_{\mu}^{-1}(a+x_{cut}))-F_\mu(F_{\mu}^{-1}(b+x_{cut}))=b-a.
        \end{align*}

       Finally,
       \begin{align*}
           (M_\mu^\nu)_\#\mu&=\left(F_{\nu,x_{cut}}^{-1}(F_{\mu,x_{cut}}(\cdot -x_{cut})+x_{cut})\right)_\#\mu\\
           &=\left(z(F_{\nu,{x_{cut}}}^{-1}( F_{\mu,x_{cut}}(\cdot-x_{cut})))\right)_\#\mu\\
            &=z_\#(F_{\nu,{x_{cut}}}^{-1})_\#  ( F_{\mu,x_{cut}}(\cdot-x_{cut}))_\#\mu\\
            &= z_\#(F_{\nu,{x_{cut}}}^{-1})_\#\mathcal{L}_{\mathbb{S}^1} \qquad (\text{by \eqref{eq: pushforward F_mu,cut}})\\
            &=z_\#\nu_{x_{cut}}\\
            &=\nu \qquad (\text{by \eqref{eq: pushforward change coord}}).
       \end{align*}
       
        Now, let us prove that $M_\mu^\nu$ is optimal. 

        First, assume that $\mu$ is absolutely continuous with respect to the Lebesgue measure on $\mathbb{S}^1$ and let $f_\mu$ denote its density function. We will use the change of variables 
        \begin{equation*}
            \begin{cases}
             &u=F_{\mu,x_{cut}}(x-x_{cut})=F_\mu(x)-F_\mu(x_{cut})\\ &du=f_\mu(x) \, dx .  
            \end{cases}
        \end{equation*}
        So,
        \begin{align*}
            \int_0^1 h(|M_\mu^\nu(x)-x|_\mathbb{R}) \, d\mu(x)
            &=\int_0^1 h(|F_{\nu,x_{cut}}^{-1}(F_{\mu,x_{cut}}(x-x_{cut}))-(x-x_{cut})|_\mathbb{R}) \, \underbrace{f_\mu(x)dx}_{d\mu(x)}\\
            &=            
            \int_{-x_0}^{1-x_0} h(|F_{\nu,x_{cut}}^{-1}(u)-F_{\mu,x_{0}}^{-1}(u)|_\mathbb{R}) \, du\\
            &=\int_{0}^{1} h(|F_{\nu,x_{cut}}^{-1}(u)-F_{\mu,x_{cut}}^{-1}(u)|_\mathbb{R}) \, du\\
            &=COT_{h}(\mu,\nu).
        \end{align*}
        

        Now, let us do the proof in general:
        \begin{align*}
            \int_0^1 h(|M_\mu^\nu(x)-x|_\mathbb{R}) \, d\mu(x)
            &=\int_0^1 h(|F_{\nu,x_{cut}}^{-1}(F_{\mu,x_{cut}}(x-x_{cut}))-(x-x_{cut})|_\mathbb{R}) \, d\mu(x)\\
            &=            
            \int_0^1 h(|F_{\nu,x_{cut}}^{-1}(y)-F_{\mu,x_{0}}^{-1}(y)|_\mathbb{R}) \, d (F_{\mu,x_{cut}}(\cdot-x_{cut}))_\#\mu(y)\\
            &=\int_{0}^{1} h(|F_{\nu,x_{cut}}^{-1}(u)-F_{\mu,x_{cut}}^{-1}(u)|_\mathbb{R}) \, du\\
            &=COT_{h}(\mu,\nu).
        \end{align*}
        In the last equality we have used that $F_{\mu,x_{cut}}(\cdot-x_{cut})_\#\mu$ is the Lebesgue measure (see \eqref{eq: pushforward F_mu,cut}).
        
        \item     Using the definition of the generalized inverse (quantile function), we have
    \begin{align*}
        M_{\mu}^{\nu}(t)&=F_{\nu,x_{cut}}^{-1}(F_{\mu,x_{cut}}(x-x_{cut}))+x_{cut}
        \\
        &=\inf\{x': \, F_{\nu,x_{cut}}(x')>F_{\mu,x_{cut}}(x-x_{cut})\}+x_{cut}\\
        &=\inf\{x': \, F_\nu(x'+x_{cut})-F_\nu(x_{cut})>F_\mu(x)-F_\mu(x_{cut})\}+x_{cut}\\
        &=\inf\{x': \, F_\nu(x'+x_{cut})>F_\mu(x)-F_\mu(x_{cut})+F(x_{cut})\}+x_{cut}\\
        &=\inf\{x': \, F_\nu(x'+x_{cut})>F_\mu(x)-\alpha_{\mu,\nu}\}+x_{cut}\\
        &=\inf\{y-x_{cut}: \, F_\nu(y)>F_\mu(x)-\alpha_{\mu,\nu}\}+x_{cut}\\
        &=\inf\{y: \, F_\nu(y)>F_\mu(x)-\alpha_{\mu,\nu}\}+x_{cut}-x_{cut}\\
        &=F_\nu^{-1}(F_\mu(x)-\alpha_{\mu,\nu}).
    \end{align*}    
        \item This part follows from the previous item as the right-hand side of \eqref{eq: mong with alpha} does not depend on any minimizer of \eqref{eq: cot solution x0}.
        \item From \cite[Theorem 13]{mccann2001polar}, there exists a unique optimal Monge map for the optimal transport problem on the unit circle.
        Therefore, $M_\mu^\nu$ is the unique optimal transport map from $\mu$ to $\nu$.     
        For the quadratic case $h(x)=|x|^2$, we refer for example to \cite[Th. 1.25, Sec. 1.3.2]{sant2015}).         
        Moreover, in this particular case, there exists a function $\varphi$ such that $M_\mu^\nu(x)=x-\nabla\varphi(x)$, where $\varphi$ is a \textit{Kantorovich potential} (that is, a solution to the dual optimal transport problem on $\mathbb{S}^1$) and the sum is modulo $\mathbb{Z}$.

        \item The identity $(M_\mu^\nu)^{-1}=(M_\nu^\mu)$ holds from the symmetry of the cost \eqref{eq: COT def} that one should optimize. Also, it can be verified using \eqref{eq: mong with alpha} and the fact that from \eqref{eq: equation for alpha} $\alpha_{\mu,\nu}=-\alpha_{\nu,\mu}$:
        \begin{align*}
        M_\nu^\mu\circ M_\mu^\nu(x)&=
            F_\mu^{-1}\left(F_\nu(F_\nu^{-1}(F_\mu(x)-\alpha_{\mu,\nu}))-\alpha_{\nu,\mu}\right)\\
            &=F_\mu^{-1}\left(F_\mu(x)-\alpha_{\mu,\nu}+\alpha_{\mu,\nu}\right)=x.
        \end{align*}
    \end{enumerate}
\end{proof}


\begin{proposition}[Properties of the LCOT-Embedding]\label{prop: properties of embedding}\;
Let $\mu\in \mathcal{P}(\mathbb{S}^1)$ be absolutely continuous with respect to the Lebesgue measure on $\mathbb{S}^1$, and let $\nu\in \mathcal{P}(\mathbb{S}^1)$.
\begin{enumerate}
    \item $\widehat{\mu}^{\mu,h}\equiv 0$. 
    \item $\widehat{\nu}^{\mu,h}(x)\in [-0.5,0.5]$ \quad for every $x\in[0,1)$.
    \item Let $\nu_1,\nu_2\in\mathcal{P}(\mathbb{S}^1)$ with $\nu_1$ that does not give mass to atoms,  then the map
    \begin{equation}
        M:=(\widehat{\nu_2}^{\mu,h}-\widehat{\nu_1}^{\mu,h})\circ ((\widehat{\nu_1}^{\mu,h}+\mathrm{id})^{-1}) +\mathrm{id},  
    \end{equation}      
    satisfies $M_\#\nu_1=\nu_2$ (however, it is not necessarily an optimal circular transport map).

    \end{enumerate}
\end{proposition}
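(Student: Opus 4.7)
The plan is to exploit the identification between the LCOT-embedding and the optimal Monge map already established in Theorem \ref{thm: cot embedding}. Combining Definition \ref{def: embedding} with formula \eqref{eq: mong with alpha}, we have
\begin{equation*}
\widehat{\nu}^{\mu,h}(x) \;=\; F_\nu^{-1}(F_\mu(x)-\alpha_{\mu,\nu}) - x \;=\; M_\mu^\nu(x) - x,
\end{equation*}
so the embedding is nothing but the pointwise circular optimal-transport displacement from $\mu$ to $\nu$. All three items will be read off from this identification.

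For item 1, I would plug $\nu=\mu$ into \eqref{eq: equation for alpha} to get $\alpha_{\mu,\mu}=F_\mu(x_{cut})-F_\mu(x_{cut})=0$, and then, using that $\mu$ is absolutely continuous (so $F_\mu^{-1}\circ F_\mu$ is the identity), conclude $\widehat{\mu}^{\mu,h}(x)=F_\mu^{-1}(F_\mu(x))-x=0$.

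For item 2, the conceptual reason is that on $\mathbb{S}^1$ any geodesic displacement has length at most $1/2$, so the optimal circular displacement admits a representative in $[-1/2,1/2]$. I would make this rigorous via the following optimality argument. Normalize $\alpha_{\mu,\nu}$ to lie in $[-1/2,1/2]$, which is allowed since the integrand in \eqref{eq: cot alpha} is $1$-periodic in $\alpha$. Suppose toward a contradiction that $A=\{x\in[0,1):|\widehat{\nu}^{\mu,h}(x)|_\mathbb{R}>1/2\}$ had positive $\mu$-measure. Then the map $\tilde M(x):=M_\mu^\nu(x)-\mathrm{sign}(\widehat{\nu}^{\mu,h}(x))$ on $A$ and $\tilde M(x):=M_\mu^\nu(x)$ elsewhere represents the same map $\mathbb{S}^1\to\mathbb{S}^1$ as $M_\mu^\nu$, so $\tilde M_\#\mu=\nu$, but $|\tilde M(x)-x|_{\mathbb{S}^1}=|\tilde M(x)-x|_\mathbb{R}<|M_\mu^\nu(x)-x|_{\mathbb{S}^1}$ on $A$, strictly reducing the COT cost by monotonicity of $h$. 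This contradicts optimality of $M_\mu^\nu$. I would then close the argument by verifying that with $\alpha_{\mu,\nu}\in[-1/2,1/2]$ and the standard periodic extension of $F_\nu^{-1}$ (together with the left-continuous choice in Definition \ref{def: quantile}), the formula $F_\nu^{-1}(F_\mu(x)-\alpha_{\mu,\nu})-x$ picks out exactly the $[-1/2,1/2]$-representative of $M_\mu^\nu(x)-x$, by a direct case split according to the sign of $F_\mu(x)-\alpha_{\mu,\nu}$.

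For item 3, the key observation is that $\widehat{\nu_i}^{\mu,h}+\mathrm{id}=M_\mu^{\nu_i}$ for $i=1,2$ by the identification above. Since $\nu_1$ does not give mass to atoms, item~5 of Theorem \ref{thm: cot embedding} gives $(\widehat{\nu_1}^{\mu,h}+\mathrm{id})^{-1}=(M_\mu^{\nu_1})^{-1}=M_{\nu_1}^\mu$. Substituting into the definition of $M$ and using $M_\mu^{\nu_1}\circ M_{\nu_1}^\mu=\mathrm{id}$, a line of algebra yields
\begin{equation*}
M \;=\; (M_\mu^{\nu_2}-M_\mu^{\nu_1})\circ M_{\nu_1}^\mu+\mathrm{id} \;=\; M_\mu^{\nu_2}\circ M_{\nu_1}^\mu.
\end{equation*}
Then $M_\#\nu_1=(M_\mu^{\nu_2})_\#(M_{\nu_1}^\mu)_\#\nu_1=(M_\mu^{\nu_2})_\#\mu=\nu_2$ by Theorem \ref{thm: cot embedding}. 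The non-optimality remark is immediate because a composition of two circular Monge maps need not itself be cyclically monotone.

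The main obstacle is item 2: while the $|\cdot|_{\mathbb{S}^1}\le 1/2$ bound on the circular displacement is essentially automatic, the statement is about the specific real representative produced by the formula, not merely about its class modulo $\mathbb{Z}$. The bookkeeping required to confirm that the canonical periodic extensions of $F_\mu$ and $F_\nu^{-1}$, together with the normalized $\alpha_{\mu,\nu}\in[-1/2,1/2]$, conspire to return precisely the $[-1/2,1/2]$-representative is where the argument needs the most care.
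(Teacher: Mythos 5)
Items 1 and 3 of your proposal are correct and essentially identical to the paper's own argument: for item 1 the paper simply observes that the optimal Monge map from $\mu$ to itself is the identity (your route via $\alpha_{\mu,\mu}=0$ proves the same thing, with the same implicit caveat that $F_\mu^{-1}\circ F_\mu=\mathrm{id}$ only $\mu$-a.e.\ when $F_\mu$ has flat parts), and for item 3 the paper carries out literally the same computation $M=(M_\mu^{\nu_2}-M_\mu^{\nu_1})\circ M_{\nu_1}^{\mu}+\mathrm{id}=M_\mu^{\nu_2}\circ M_{\nu_1}^{\mu}$ followed by $(M_\mu^{\nu_2}\circ M_{\nu_1}^{\mu})_\#\nu_1=(M_\mu^{\nu_2})_\#\mu=\nu_2$, invoking part 5 of Theorem \ref{thm: cot embedding} exactly as you do.

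The genuine gap is in item 2, at the step ``$|\tilde M(x)-x|_{\mathbb{S}^1}=|\tilde M(x)-x|_{\mathbb{R}}<|M_\mu^\nu(x)-x|_{\mathbb{S}^1}$ on $A$''. Since $\tilde M$ and $M_\mu^\nu$ differ by integers, their \emph{circular} displacements coincide, $|\tilde M(x)-x|_{\mathbb{S}^1}=|M_\mu^\nu(x)-x|_{\mathbb{S}^1}$ for every $x$; and the Monge cost in \eqref{eq: monge formulation} is computed with $|\cdot|_{\mathbb{S}^1}$, so it is blind to the choice of integer representative. Your modified map therefore has exactly the same COT cost as $M_\mu^\nu$, no strict reduction occurs, and no contradiction with optimality arises --- as written, the argument says nothing about the real representative. (A secondary issue: the integrand of \eqref{eq: cot alpha} is \emph{not} $1$-periodic in $\alpha$, since replacing $\alpha$ by $\alpha+1$ shifts every real displacement by $1$, so the ``normalize $\alpha_{\mu,\nu}$ to $[-1/2,1/2]$'' step is also unjustified.) The repair is to compare the circular cost against the \emph{real-line} cost: the paper's proof of part 1 of Theorem \ref{thm: cot embedding} establishes that, at the optimal cut, $\int_{\mathbb{S}^1}h(|M_\mu^\nu(x)-x|_{\mathbb{R}})\,d\mu(x)=COT_h(\mu,\nu)$. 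Combining this with \eqref{eq: monge formulation} and $|z|_{\mathbb{S}^1}\le|z|_{\mathbb{R}}$ gives
\[
COT_h(\mu,\nu)\;\le\;\int_{\mathbb{S}^1}h\bigl(|\widehat{\nu}^{\mu,h}(x)|_{\mathbb{S}^1}\bigr)\,d\mu(x)\;\le\;\int_{\mathbb{S}^1}h\bigl(|\widehat{\nu}^{\mu,h}(x)|_{\mathbb{R}}\bigr)\,d\mu(x)\;=\;COT_h(\mu,\nu),
\]
so both inequalities are equalities. Since $|z|_{\mathbb{S}^1}<|z|_{\mathbb{R}}$ exactly when $|z|_{\mathbb{R}}>1/2$, and $h$ is strictly increasing in the relevant cases (e.g.\ $h=|\cdot|^p$), a.e.\ equality of the integrands forces $|\widehat{\nu}^{\mu,h}(x)|_{\mathbb{R}}\le 1/2$ for $\mu$-a.e.\ $x$, which is the claim (upgrading ``a.e.'' to ``every $x\in[0,1)$'' is precisely the quantile-convention bookkeeping you flag, and is glossed over by the paper as well: its entire proof of item 2 is the remark that the embedding is the optimal displacement and $|z|_{\mathbb{S}^1}\le 0.5$). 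So your instinct that the representative issue is the crux is right, but the cost-reduction mechanism you chose cannot detect it, because the circular cost is representative-blind.
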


\begin{proof}[Proof of Proposition \ref{prop: properties of embedding}]\,

\begin{enumerate}
    \item It trivially holds that the optimal Monge map from the distribution $\mu$ to itself is the identity $\mathrm{id}$, or equivalently, that the optimal displacement is zero for all the particles.  
    \item It holds from the fact of being the optimal displacement, that is,
    \begin{equation*}
        COT_{h}(\mu,\nu)= \inf_{M:\, M_{\#}\mu=\nu}\int_{\mathbb{S}^1}h(|M(x)-x|_{\mathbb{S}^1}) \, d\mu(x)=\int_{\mathbb{S}^1}h(|\widehat{\nu}^{\mu,h}(x)|_{\mathbb{S}^1}) \, d\mu(x),
    \end{equation*}
    and from the fact that $|z|_{\mathbb{S}^1}$ is at most $0.5$.
    \item We will use that $\widehat{\nu}^{\mu,h}=M_\mu^\nu-\mathrm{id}$, and that $(M_\mu^\nu)^{-1}=M_\nu^\mu$:

    \begin{align*}
        M(x)&=(\widehat{\nu_2}^{\mu,h}-\widehat{\nu_1}^{\mu,h})\circ M_{\nu_1}^{\mu}(x) +x\\
        &=(M_\mu^{\nu_2}-M_\mu^{\nu_1})\circ M_{\nu_1}^{\mu}(x) +x\\
        &=M_\mu^{\nu_2}\circ M_{\nu_1}^{\mu}(x) -x +x\\
        &=M_\mu^{\nu_2}\circ M_{\nu_1}^{\mu}(x).
    \end{align*}
Finally, notice that 
$$(M_\mu^{\nu_2}\circ M_{\nu_1}^{\mu})_\#\nu_1=(M_\mu^{\nu_2})_\#\left( (M_{\nu_1}^{\mu})_\#\nu_1\right)=(M_\mu^{\nu_2})_\#\mu=\nu_2.$$
\end{enumerate}
\end{proof}

Now, we will proceed to prove Theorem \ref{thm: distance}. By having this result, it is worth noticing that $LCOT_{\mu,p}(\cdot,\cdot)^{1/p}$ endows $\mathcal{P}(\mathbb{S}^1)$ with a metric-space structure. 
The proof is based on the fact that we have introduced an explicit embedding and then we have considered an $L^p$-distance. It will follow that we have defined a kernel distance (that is in fact positive semidefinite). 

\begin{proof}[Proof of Theorem \ref{thm: distance}]

From \eqref{eq: LCOT1}, it is straightforward to prove the 
 symmetric property and non-negativity. 

If $\nu_1=\nu_2$, by the uniqueness of the optimal COT map (see Theorem \ref{thm: cot embedding}, Part 3), we have $\widehat{\nu_1}^{\mu,h}=\widehat{\nu_2}^{\mu,h}$. 
Thus, $LCOT_{\mu,h}(\nu_1,\nu_2)=0$.

For the reverse direction, if $LCOT_{\mu,h}(\nu^1,\nu^2)=0$, then 
$$h(\min_{k\in\mathbb{Z}}\{|\widehat{\nu_1}^{\mu,h}(x)-\widehat{\nu_2}^{\mu,h}(x)+k|\})=0 \qquad \mu-\text{a.s.}$$
Thus, 
$$\widehat{\nu_1}^{\mu,h}(x)\equiv_1 \widehat{\nu_2}^{\mu,h}(x) \qquad \mu-\text{a.s.}$$
(where $\equiv_1$ stands for the equality modulo $\mathbb{Z}$).
That is, 
$$
M_{\mu}^{\nu_1}(x)=\widehat{\nu_1}^{\mu,h}(x)+x\equiv_1 \widehat{\nu_2}^{\mu,h}(x)+x=M_{\mu}^{\nu_2}(x) \qquad\mu\text{ a.s.}$$
Let $S\subseteq[0,1)$ denote the set of $x$ such that the equation above holds, we have $\mu(S)=1, \mu(\mathbb{S}^1\setminus S)=0$. Equivalently, for any (measurable) $B\subseteq\mathbb{S}^1$, $\mu(B\cap S)=\mu(B)$. Pick any Borel set $A\subseteq \mathbb{S}^1$, we have: 
\begin{align}
\nu_1(A)
&=\mu\left((M_{\mu}^{\nu_1})^{-1}(A)\right)\nonumber\\
&=\mu\left((M_{\mu}^{\nu_1})^{-1}(A) \cap S\right)\nonumber\\
&=\mu\left((M_{\mu}^{\nu_2})^{-1}(A) \cap S\right)\nonumber\\ 
&=\mu((M_{\mu}^{\nu_2})^{-1}(A))\nonumber\\
&=\nu_2(A)
\end{align}
where the first and last equation follows from the fact $M_{\mu}^{\nu_1},M_{\mu}^{\nu_2}$ are push forward mapping from $\mu$ to $\nu_1$, $\nu_2$ respectively. 

Finally, we verify the triangular inequality. Here we will use that $h(x)=|x|^p$, for $1\leq p<\infty$. Let $\nu_1,\nu_2,\nu_3\in\mathcal{P}(\mathbb{S}^1)$,
\begin{align*}
{LCOT}_{\mu,p}(\nu_1,\nu_2)^{1/p}&=\left(\int_{0}^1 (|\widehat{\nu_1}(t)-\widehat{\nu_2}(t)|_{\mathbb{S}^1})^{p} \, d\mu(t)\right)^{1/p} \nonumber \\ 
&\leq \left(\int_{0}^1 \left(|\widehat{\nu_1}(t)-\widehat{\nu_3}(t)|_{\mathbb{S}^1}+|\widehat{\nu_3}(t)-\widehat{\nu_2}(t)|_{\mathbb{S}^1}\right)^p \, d\mu(t)\right)^{1/p}\nonumber\\ 
&\leq \left(\int_0^1|\widehat{\nu_1}(t)-\widehat{\nu_3}(t)|_{\mathbb{S}^1}^p \,  d\mu(t)\right)^{1/p}+\left(\int_0^1|\widehat{\nu_3}(t)-\widehat{\nu_2}(t)|_{\mathbb{S}^1}^p \, d\mu(t) \right)^{1/p}\nonumber \\
&=LCOT_{\mu,p}(\nu_1,\nu_3)^{1/p}+LCOT_{\mu,p}(\nu_2,\nu_3)^{1/p}
\end{align*}
where the last inequality holds from Minkowski inequality. 
\end{proof}

\subsection{Understanding the relation between the minimizers of \eqref{eq: cot solution x0} and \eqref{eq: cot alpha}}\label{sec: appendix alpha vs x_cut}

We briefly revisit the discussion in Section \eqref{sec: problem set up}, specifically in Remark \eqref{remark: not unique x_cut but unique alpha}, concerning the optimizers $x_{{cut}}$ and $\alpha_{\mu,\nu}$ of \eqref{eq: cot solution x0} and \eqref{eq: cot alpha}, respectively.

Assuming minimizers exist for \eqref{eq: cot solution x0} and \eqref{eq: cot alpha}, we first explain why we adopt the terminology \textit{"cutting point"} ($x_{{cut}}$) for a minimizer of \eqref{eq: cot solution x0} and not for the minimizer $\alpha_{\mu,\nu}$ of \eqref{eq: cot alpha}. On the one hand, the cost function presented in \ref{eq: cot solution x0} is given by
\begin{equation}\label{eq: cost x_0}
  \text{Cost}(x_0):=\int_{0}^1h(|F_{\mu,x_0}^{-1}(x)-F_{\nu,x_0}^{-1}(x)|_{\mathbb{R}}) \, dx.  
\end{equation}
We seek to minimize over $x_0 \in [0,1) \sim \mathbb{S}^1$, aiming to find an optimal $x_0$ that affects both CDFs $F_\mu$ and $F_\nu$. By looking at the cost \ref{eq: cost x_0}, for each fixed $x_0$, we change the system of reference by adopting $x_0$ as the origin. Then, once an optimal $x_0$ is found (called $x_{\text{cut}}$), it leads to the optimal transportation displacement, providing a change of coordinates to unroll the CDFs of $\mu$ and $\nu$ into $\mathbb{R}$ and allowing the use the classical Optimal Transport Theory on the real line (see Section \eqref{sec: closed formula for COT} and the proofs in Appendix \eqref{sec: Appendix proofs}).
On the other hand, the cost function in \ref{eq: cot alpha} is
$$ \text{Cost}(\alpha):=\int_{0}^1h(|F_{\mu}^{-1}(x+\alpha)-F_{\nu}^{-1}(x)|_{\mathbb{R}}) \, dx,$$
and the minimization runs over every real number $\alpha$. Here, the shift by $\alpha$ affects only one of the CDFs, not both. Therefore, it will not allow for a consistent change in the system of reference. This is why we do not refer to $\alpha$ as a cutting point in this paper, but we do refer to the minimizer of \eqref{eq: cot solution x0} as $x_{{cut}}$.

Finally, Figure \ref{fig: alpha vs x_cut} below is meant to provide a visualization of Remark \ref{remark: not unique x_cut but unique alpha}, that is, to show through an example that, when minimizers for \ref{eq: cot solution x0} and  \eqref{eq: cot alpha} do exist, while one could have multiple minimizers of \ref{eq: cot solution x0}, the minimizer of \ref{eq: cot alpha} is unique.
\begin{figure}[ht]
    \centering
    \includegraphics[width=1\linewidth]{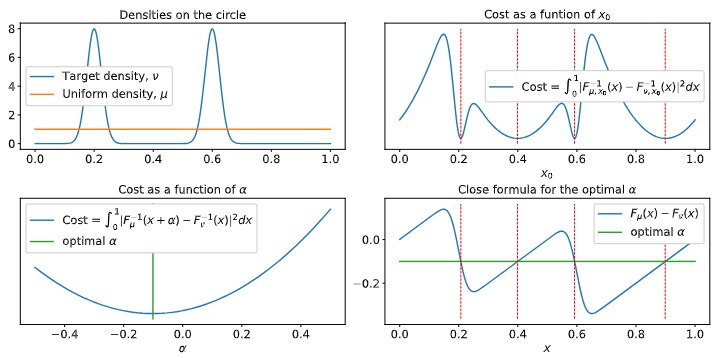}
    \vspace{-.2in}
    \caption{Top left: Uniform density, $\mu$, and a random target density $\nu$ on $\mathbb{S}^1$. Top right: The circular transportation cost  $\int_{0}^1 |F_{\mu,x_0}^{-1}(x)-F_{\nu,x_0}^{-1}(x)|^2 \, dx$ is depicted as a function of the cut,  $x_0$, showing that the optimization in \eqref{eq: cot solution x0} can have multiple minimizers. Bottom right: Following \eqref{eq: equation for alpha}, we depict the difference between the two CDFs, $F_\mu(x)-F_\nu(x)$, for each $x\in[0,1)\sim\mathbb{S}^1$. As can be seen, for the optimal cuts (dotted red lines), the difference is constant, indicating that the optimal $\alpha$ for \eqref{eq: cot alpha} is unique.  Bottom left: The optimizer for the circular transportation cost in \eqref{eq: cot alpha} is unique, and given that $\mu$ is the uniform measure, it has a closed-form solution $\mathbb{E}(\nu)-\frac{1}{2}$.}
    \label{fig: alpha vs x_cut}
\end{figure}

\subsection{Time complexity of Linear COT}\label{sec: appendix numerics} 

In this section, we assume that we are given discrete or empirical measures.\footnote{It is worth mentioning that for some applications, the LCOT framework can be also used for continuous densities,  as in the case of the CDT \cite{park2018cumulative}.}

First, we mention that according to \cite[Section 6]{delon2010fast}, given two non-decreasing step functions $F$ and $G$ represented by 
$$[\, [x_1,\dots,x_{N_1}], \;[F(x_1),\dots,F(x_{N_1})] \; ] \qquad \text{and} \qquad [ \,[y_1,\dots,y_{N_2}], \,[G(y_1),\dots,G(y_{N_2})] \, ],$$
the computation of an integral of the form
$$\int c(F^{-1}(x),G^{-1}(x)) \, dx$$
requires $\mathcal{O}(N_1+N_2)$ evaluations of a given cost function $c(\cdot,\cdot)$.

Now, by considering the reference measure $\mu=Unif(\mathbb{S}^1)$ we will detail our algorithm for computing $LCOT(\nu_1,\nu_2)$. Let us assume that  $\nu_1,\nu_2$ are two discrete probability measures on $\mathbb{S}^1$ having $N_1$ and $N_2$ masses, respectively. We represent these measures $\nu_i=\sum_{j=1}^{N_i}m_j^i\delta_{x_j^i}$ (that is, $\nu_1$ has mass $m_{j}^1$ at location $x_j^1$ for $j=1,\dots, N_1$, and analogously for $\nu_2$) as arrays of the form
$$\nu_i=[ \ [x^i_1,\dots,x^i_{N_i}], \ [m_1^i,\dots ,m_{N_i}^i] \ ], \qquad i=1,2.$$

Algorithm to compute LCOT:
\begin{enumerate}
    \item For $i=1,2$, compute $\alpha_{\mu,\nu_i}=\mathbb{E}(\nu_i)-1/2$. 
    \item For $i=1,2$, represent $F_{\nu_i}(\cdot)+\alpha_{\mu,\nu_i}$ as the arrays
    $$[\ [x^i_1,\dots,x^i_{N_i}],\ [c_1^i,\dots ,c_{N_i}^i] \ ]$$
    where
    $$c_1^i:=m_1^i+\alpha_{\mu,\nu_i}, \qquad c_j^i:=c_{j-1}^i+m_j^i, \quad \text{for } j=2,\dots, N_i.$$
    \item Use that 
    
        $$F_\nu^{-1}(x-\alpha_{\mu,\nu})=(F_\nu(\cdot)+\alpha_{\mu,\nu})^{-1}(x),$$

and the algorithm provided in \cite[Section 6]{delon2010fast} mentioned above with $F=F_{\nu_1}(\cdot)+\alpha_{\mu,\nu_1}$ and $G=F_{\nu_2}(\cdot)+\alpha_{\mu,\nu_2}$ to compute
    \begin{align*}
        LCOT(\nu_1,\nu_2)&=\|\widehat{\nu_1}-\widehat{\nu_2}\|^2_{L^2(\mathbb{S}^1)}
        \\
        &=\int_0^1 |\left(F_{\nu_1}^{-1}(x-\alpha_{\mu,\nu_1})-x\right)-\left(F_{\nu_2}^{-1}(x-\alpha_{\mu,\nu_2})-x\right)|_{\mathbb{S}^1}^2 \, dx\\
        &=\int_0^1 |(\underbrace{F_{\nu_1}(\cdot)+\alpha_{\mu,\nu_1}}_{F})^{-1}(x)-(\underbrace{F_{\nu_2}(\cdot)+\alpha_{\mu,\nu_2}}_{G})^{-1}(x)|_{\mathbb{S}^1}^2 \, dx
    \end{align*}
\end{enumerate}
Each step requires $\mathcal{O}(N_1+N_2)$ operations. Therefore, the full algorithm to compute $LCOT(\nu_1,\nu_2)$ is of order $\mathcal{O}(N_1+N_2)$. 


\subsection{Experiments}
 
The following Figure \ref{fig: mds_scale} is from an extra experiment analogous to Experiment 1 but for a different family of measures (Figure \ref{fig: mds_scale} Left). We include it to have an intuition of how the LCOT behaves under translations and dilations of an initial von Mises density.
 
\begin{figure}[ht]
    \centering
    \includegraphics[width=\linewidth]{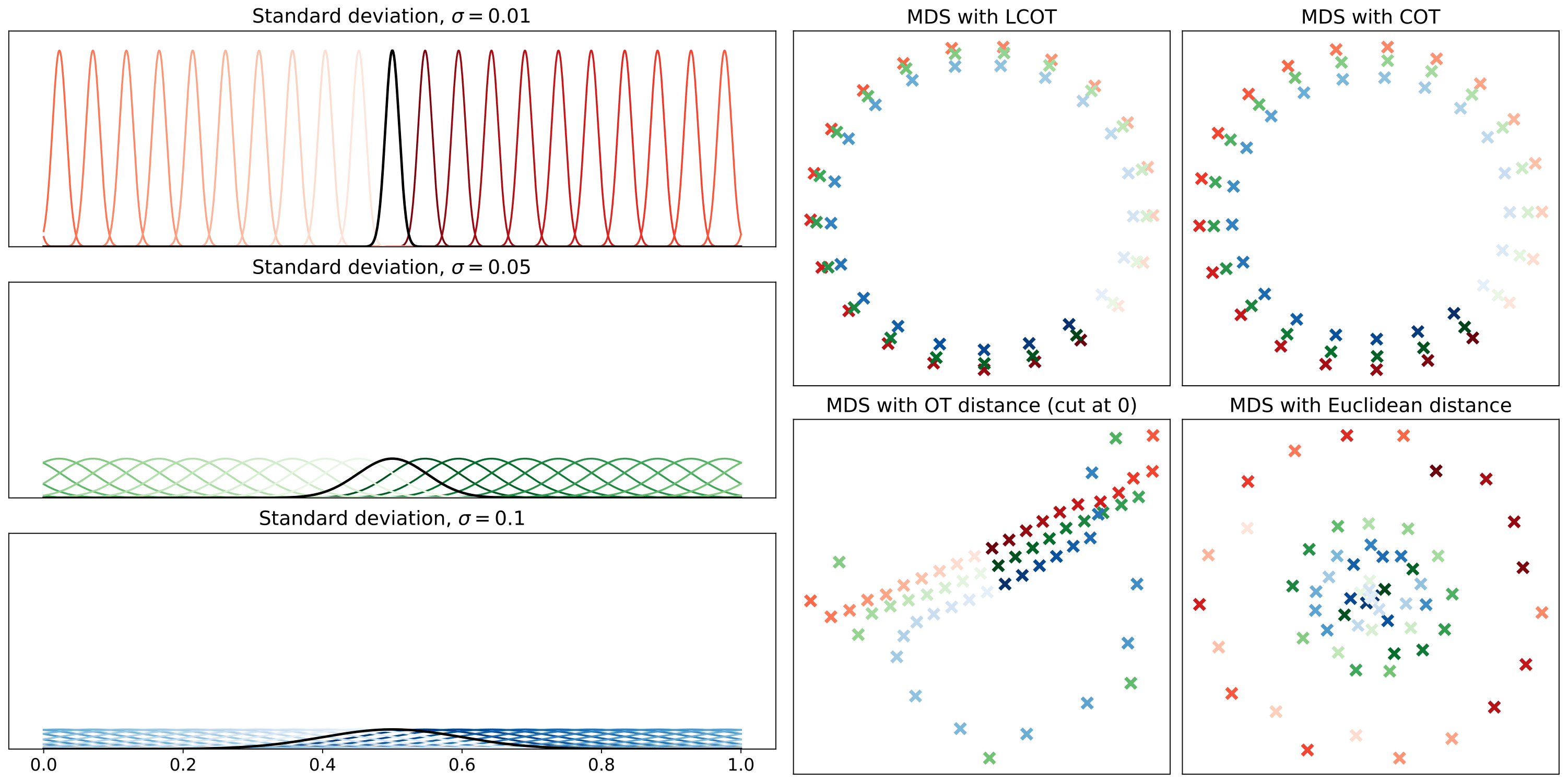}
    \caption{MDS for embedding classes of probability densities into an Euclidean space of dimension $2$ where the original pair-wise distances (COT-distance, LOT-distance, Euclidean or $L^2$-distance) are preserved as well as possible.}
    \label{fig: mds_scale}
\end{figure}

\subsection{Understanding the embedding in differential geometry} \label{sec: appendix log exp}

Our embedding $ \nu \mapsto \widehat{\nu} $ as given by equation \eqref{eq: embedding general} aligns with the definition of the Logarithm function presented in \cite[Definition 2.7]{sarrazin2023linearized}. To be specific, for $ \mu,\nu \in \mathbb{S}^1 $ and the Monge mapping $M_{\mu}^\nu $, the Logarithm function as introduced in \cite{sarrazin2023linearized} is expressed as:
\[ \mathcal{P}_2(\mathbb{S}^1)\ni \nu \mapsto \log^{COT}_{\mu}(\nu) \in L^2(\mathbb{S}^1,T\mathbb{S}^1;\mu). \]

Here, the tangent bundle of $ \mathbb{S}^1 $ is represented as $$ T\mathbb{S}^1 := \{(x, T_x(\mathbb{S}^1))| \, x\in\mathbb{S}^1\}, $$ where $ T_x(\mathbb{S}^1) $ denotes the tangent space at the point $ x\in\mathbb{S}^1 $. The space $ L^2(\mathbb{S}^1,T\mathbb{S}^1;\mu) $ is the set of vector fields on $ \mathbb{S}^1$ with squared norms (based on the metric on $ T\mathbb{S}^1 $), that are $ \mu $-integrable. The function (vector field) $ \log^{COT}_{\mu}(\nu) $ is defined as:
\[ \log^{COT}_{\mu}(\nu) := (\mathbb{S}^1\ni x \mapsto (x,v_x)) \in T\mathbb{S}^1, \]
where $ v_x \mapsto T_x(\mathbb{S}^1) $ is the initial velocity of the unique constant speed geodesic curve $ x \mapsto T_\mu^\nu(x) $.

The relation between $ \log_\mu^{COT}(\nu) $ and $ \widehat{\nu} $ in \eqref{eq: embedding} can be established as follows:
For any $ x $ in $ \mathbb{S}^1 $, the spaces $ T_x(\mathbb{S}^1)$ and $\mathbb{S}^1 $ can be parameterized by $ \mathbb{R}$ and $[0,1)$, respectively. Then, the unique constant speed curve $ x \mapsto M_{\mu}^\nu(x) $ is given by:
\[ x(t) := x + t(M_{\mu}^\nu(x) - x), \qquad \forall t\in[0,1].\]
Then, the initial velocity is $ M_{\mu}^\nu(x) - x $.
Drawing from \eqref{eq: circ opt disp}, Theorem \ref{thm: cot embedding}, and Proposition \ref{prop: properties of embedding}, we find $ \widehat{\nu}(x) = M_{\mu}^\nu(x) - x $ for all $ x $ in $ \mathbb{S}^1 $, making $ \widehat{\nu} $ and $ \log_\mu^{COT}(\nu) $ equivalent.

However, it is important to note that while $ \log_\mu^{COT} $ is defined for a generic (connected, compact, and complete\footnote{In \cite{sarrazin2023linearized}, the Riemannian manifold is not necessarily compact. However, the measures $ \mu, \nu $ must have compact support sets. For brevity, we have slightly overlooked this difference.}) manifold, it does not provide a concrete computational method for the embedding $ \log_\mu^{COT} $. Our focus in this paper is on computational efficiency, delivering a closed-form formula.

Regarding the embedding space, in \cite{sarrazin2023linearized}, the space $L^2(\mathbb{S}^1, T\mathbb{S}^1;\mu)$ is equipped with the $L^2$, induced by $T\mathbb{S}^1$. Explicitly, for any $f$ belonging to $L^2(\mathbb{S}^1, T\mathbb{S}^1; \mu)$,
\begin{align*}
  \|f\|^2 = \int_0^1 \|f(x)\|_x^2 dx = \int_0^1 |f(x)|^2 \, dx,
\end{align*}
where $\|f(x)\|_x^2$ represents the norm square in the tangent space $T_x(\mathbb{S}^1)$ of the vector $f(x)$. By parameterizing $\mathbb{S}^1$ and $T_x(\mathbb{S}^1)$ as $[0,1)$ and $\mathbb{R}$, respectively, this squared norm becomes $|f(x)|^2$. Consequently, $L^2(\mathbb{S}^1, T\mathbb{S}^1; \mu)$ becomes an inner product space, whereby the expression  (polarization identity) $\|f+g\|^2 - \|f\|^2 - \|g\|^2$ establishes an inner product between $f$ and $g$.

However, in this paper, the introduced embedding space $L^2(\mathbb{S}^1, d\mu)$ presented in \eqref{eq: embedding space}. This space uses the $L^2$-norm on the circle, defined for each $f$ in $L^2(\mathbb{S}^1, d\mu)$ as:
\[
\|f\|_{L^2(S^1; d\mu)}^2 = \int_{\mathbb{S}^1} |f(x)|^2_{\mathbb{S}^1} \,  d\mu.
\]
Unlike the previous space, this does not induce an inner product (in fact, $|\cdot|_{\mathbb{S}^1} $ is not a norm). As such, 
throughout this paper, we term our embedding as a ``linear embedding'' rather than a ``Euclidean embedding''.

\end{document}